\documentclass[lettersize,journal,comsoc]{IEEEtran}
\usepackage{amsmath,amsfonts}

\DeclareMathOperator*{\argmin}{arg\,min}

\usepackage{algorithmic}
\usepackage{algorithm}
\usepackage{array}
\usepackage[caption=false,font=normalsize,labelfont=sf,textfont=sf]{subfig}
\usepackage{textcomp}
\usepackage{stfloats}
\usepackage{url}
\usepackage{verbatim}
\usepackage{graphicx}
\usepackage{cite}
\usepackage{booktabs}
\usepackage{placeins}

\newtheorem{theorem}{Theorem}
\newtheorem{proposition}[theorem]{Proposition}
\newtheorem{corollary}[theorem]{Corollary}

\newtheorem{lemma}[theorem]{Lemma}
\begin{document}

\title{Model-Adaptive and Risk-Constrained Frequency Hopping Against Predictive Jammers}

\author{Chen~Yanbo and Zhou~Xinjing%
	\thanks{Chen Yanbo is with the School of Electrical and Electronic Engineering,
		Nanyang Technological University, Singapore
		(e-mail: YCHEN123@e.ntu.edu.sg).}%
	\thanks{Zhou Xinjing is with Universit\`a di Udine, Udine, Italy
		(e-mail: zhou.xinjing@spes.uniud.it).}}

\maketitle

\begin{abstract}
	Adaptive frequency hopping against predictive jamming must address both
	model uncertainty and policy exposure: the context--loss relationship
	may vary across operating regimes, while persistent hopping patterns may
	expose high-probability channels to attack. We propose D-PACT-AFH, a
	model-adaptive and risk-constrained adversarial contextual-bandit
	framework in which a Tsallis-FTRL master combines a global linear learner
	with a partitioned local learner and selects the model class online.
	D-PACT-Hit incorporates channel-wise marginal hit risk into model
	selection, while D-PACT-Safe applies a minimum-Kullback--Leibler
	projection to enforce a per-slot risk budget. We establish estimator
	validity under non-anticipating attacks, an oracle decomposition relative
	to the better fixed base learner on the common execution trajectory, and
	an exact conditional-risk guarantee for the
	Safe projection. Experiments across diverse channel regimes and jammer
	types demonstrate effective model adaptation and a controllable
	goodput--risk tradeoff: D-PACT-AFH recovers \(95.5\%\) of the local
	learner's gain under observable switching while avoiding \(77.7\%\) of
	its degradation in a negative-control regime.
\end{abstract}

\begin{IEEEkeywords}
	Adaptive frequency hopping, adversarial contextual bandits,
	anti-jamming communications, online learning, predictive jamming,
	model selection, risk-constrained learning.
\end{IEEEkeywords}


\section{Introduction}
\label{sec:introduction}

\IEEEPARstart{A}{daptive} frequency hopping is a standard defense
against interference and intentional jamming. Its effectiveness depends
on the transmitter's ability to identify favorable channels while
avoiding exploitable hopping patterns. Online learning has consequently
been used to formulate frequency hopping as an adversarial bandit
problem, adapt channel access under unknown environments, jointly select
channels and transmission rates, and process richer spectrum
observations through deep reinforcement learning
\cite{wang2012ufh,zhou2016unknown,hanawal2016joint,
	xu2020intelligent,qi2024hopping}.

Two limitations remain. First, most learning-based schemes fix the
learner's model class in advance. A global linear contextual model shares
observations efficiently across channels and regimes, and methods such as
LC-Tsallis-INF provide strong guarantees when this representation is
adequate \cite{kato2025lc}. The same sharing can become systematically
misspecified when channel responses vary across observable regimes.
Locally specialized models reduce this approximation bias, but fragment
the observations and incur higher finite-horizon estimation cost.
Adaptive frequency hopping must therefore determine online whether
global sharing or local specialization is more appropriate.

Second, low communication loss does not imply low attack exposure.
Learning-based jammers can identify effective physical-layer actions
\cite{amuru2016jamming} or infer recurring channel-hopping behavior
\cite{bout2022folpetti}. A transmitter may therefore achieve high nominal
utility while assigning excessive probability to predictably attacked
channels. Existing safety-aware and constrained contextual-bandit
methods provide general risk-control mechanisms
\cite{sun2017safety,thornton2022constrained}, but do not jointly address
contextual-model uncertainty and predictive-jamming exposure.

We develop D-PACT-AFH, a model-adaptive and risk-constrained
frequency-hopping framework for non-anticipating predictive jammers. A
Tsallis-FTRL master combines a globally shared linear learner with a
partitioned local learner and adapts their weights from bandit feedback.
D-PACT-Hit incorporates channel-wise marginal hit risk into model
selection, whereas D-PACT-Safe projects the complete mixed policy onto a
per-slot risk budget through a minimum-Kullback--Leibler perturbation.

The main contributions are as follows.

\begin{itemize}
	\item We formulate predictive-jamming-resistant frequency hopping as
	an adversarial contextual bandit with an explicit slot-wise
	information pattern. Under the fixed-budget attack model, the jammer
	commits its hidden attack set before the transmitter forms and
	samples its current policy, and the resulting channel-wise marginals
	define the exact conditional hit risk.
	
	\item We develop an online model-selection architecture that adapts
	between global and local contextual learners. Its hit-aware mode
	incorporates predictable exposure into the master objective, while
	its safe mode enforces a policy-level risk budget through a unique
	minimum-KL projection.
	
	\item We establish conditional estimator validity, an oracle-style
	decomposition relative to the better fixed base on the common
	execution trajectory, and an exact conditional hit-risk guarantee
	for feasible Safe projections. Experiments across channel regimes,
	jammer types, risk budgets, and system scales demonstrate adaptation
	under model mismatch and controllable goodput--risk tradeoffs.
\end{itemize}

The remainder of this paper is organized as follows.
Section~\ref{sec:system_model} presents the system and adversary models.
Section~\ref{sec:architecture} develops D-PACT-AFH and its risk-aware
modes. Section~\ref{sec:theory} provides the performance analysis.
Section~\ref{sec:experiments} reports the numerical results, and
Section~\ref{sec:conclusion} concludes the paper.


\section{Related Work}
\label{sec:related_work}

\textit{Learning-based anti-jamming communications:}
Online learning has been widely applied to wireless adaptation without
an accurate model of the channel or jammer. Existing work formulates
uncoordinated frequency hopping as an adversarial multi-armed bandit
\cite{wang2012ufh}, addresses unknown stochastic and adversarial
environments \cite{zhou2016unknown}, jointly adapts hopping frequencies
and transmission rates \cite{hanawal2016joint}, improves bandit-based
hopping efficiency \cite{odeyomi2020mitigating}, and uses deep
reinforcement learning for richer observations and action spaces
\cite{xu2020intelligent,qi2024hopping}. These methods generally retain a
prespecified learner or representation rather than adapting the
contextual model class online.

\textit{Contextual learning and model adaptation:}
Linear contextual bandits improve sample efficiency by sharing
observations through a common parameterization, and LC-Tsallis-INF
provides best-of-both-worlds guarantees under a global linear model
\cite{kato2025lc}. Misspecified contextual bandits allow bounded
deviations from linear realizability \cite{takemura2021misspecified},
whereas model-selection methods adapt among candidate classes
\cite{foster2019model,muthukumar2022model}. Master--base frameworks such
as CORRAL likewise combine bandit algorithms under partial feedback
\cite{agarwal2017corral,pacchiano2020model}. In D-PACT-AFH, however, the
bases encode distinct structural assumptions: the global learner pools
observations across regimes, while the local learner specializes over
partitioned context regions. Both must be updated from the same
off-policy bandit feedback and coordinated with jamming-risk control.

\textit{Predictive jamming and constrained learning:}
Learning has also been studied from the attacker's perspective.
Jamming Bandits learns effective physical-layer jamming actions
\cite{amuru2016jamming}, whereas FOLPETTI predicts recurring
channel-hopping behavior \cite{bout2022folpetti}. Separately,
safety-aware adversarial contextual bandits impose risk constraints
\cite{sun2017safety}, and constrained contextual learning has been
applied to adaptive radio-waveform selection
\cite{thornton2022constrained}. These works treat adaptive attacks,
model uncertainty, and constrained decisions largely in isolation.
D-PACT-AFH jointly adapts the contextual model class, incorporates
channel-wise marginal hit risk into learning, and constrains the final
executed hopping policy.


\section{System Model and Problem Formulation}
\label{sec:system_model}

\subsection{Contextual Frequency-Hopping System}

We consider a transmitter--receiver pair operating over a set
$\mathcal{K}=[K]$ of candidate channels for $T$ time slots.
At the beginning of slot $t$, the learner observes the public context matrix
\[
\mathbf{X}_t
=
[\mathbf{x}_{t,1},\ldots,\mathbf{x}_{t,K}]
\in\mathbb{R}^{d\times K},
\]
where $\mathbf{x}_{t,k}$ describes the observable state of channel $k$,
including relative signal quality, delay quality, and availability.
The complete within-slot information pattern, including public attack
prediction, jammer commitment, policy formation, private action sampling,
and feedback, is specified below and illustrated in
Fig.~\ref{fig:system_model}.

\begin{figure*}[t]
	\centering
	\includegraphics[width=0.98\textwidth]
	{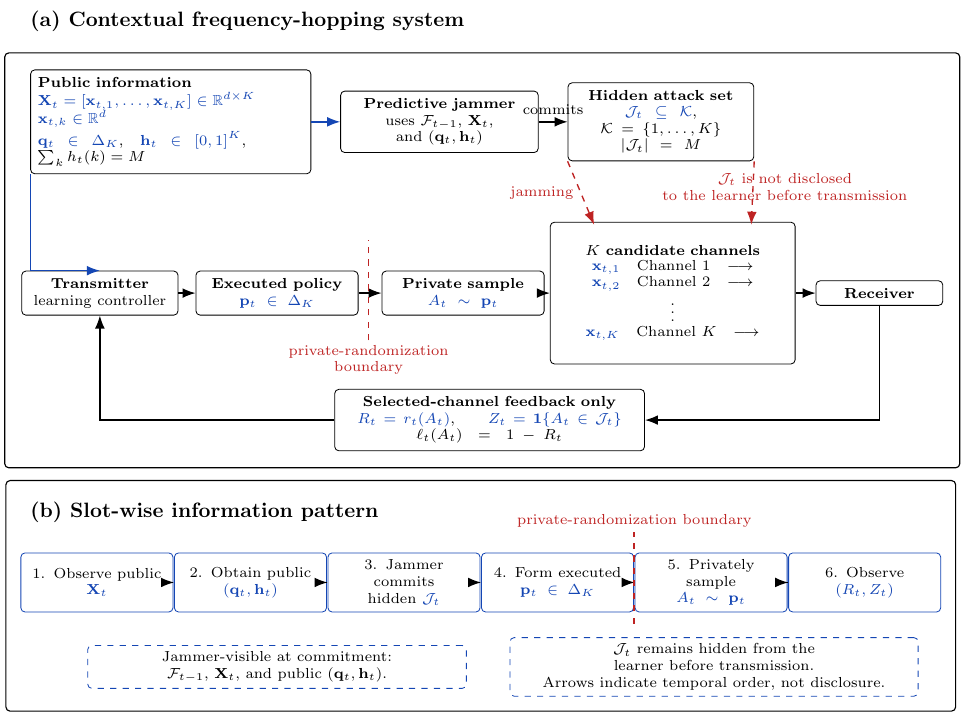}
	\caption{Contextual frequency-hopping system and within-slot information
	pattern. The learner observes the public contexts $\mathbf{X}_t$ and
	obtains the attack prediction $\mathbf{q}_t$ and marginal hit-risk vector
	$\mathbf{h}_t$. The jammer commits the hidden attack set $\mathcal{J}_t$
	before the learner forms the executed policy $\mathbf{p}_t$ and privately
	samples $A_t\sim\mathbf{p}_t$. The learner observes only the selected-channel
	reward $R_t=r_t(A_t)$ and hit indicator
	$Z_t=\mathbf{1}\{A_t\in\mathcal{J}_t\}$; the corresponding observed loss is
	$\ell_t(A_t)=1-R_t$. The attack set remains hidden from the learner before
	transmission. Arrows in panel (b) indicate temporal order, not disclosure
	of the hidden attack set.}
	\label{fig:system_model}
\end{figure*}

\subsection{Public Attack Prediction and Hidden Attack Realization}

Before the learner forms $\mathbf{p}_t$, a public attack predictor
produces
\begin{equation}
	\mathbf{q}_t
	=
	\mathcal{Q}_t(\mathcal{F}_{t-1},\mathbf{X}_t)
	\in\Delta_K,
	\label{eq:predictor_distribution}
\end{equation}
where $\mathcal{F}_{t-1}$ denotes the interaction history available
before slot $t$. The vector $\mathbf{q}_t$ represents the attacker's
predicted channel preference and is distinct from the transmitter's
hopping policy.

For a fixed attack budget $M\in\{0,\ldots,K-1\}$, the predictor is
converted into marginal channel-inclusion probabilities
$\mathbf{h}_t\in[0,1]^K$ according to
\begin{equation}
	h_t(k)=\min\{1,\lambda_t q_t(k)\},
	\qquad
	\sum_{k=1}^{K}h_t(k)=M,
	\label{eq:attack_inclusion_probability}
\end{equation}
where $\lambda_t\geq0$ is chosen to satisfy the attack budget. A
dependent-rounding procedure then draws a hidden set
$\mathcal{J}_t\subseteq\mathcal{K}$ such that
$|\mathcal{J}_t|=M$ and
\begin{equation}
	\Pr\!\left(
	k\in\mathcal{J}_t
	\mid
	\mathcal{F}_{t-1},\mathbf{X}_t,\mathbf{q}_t
	\right)
	=
	h_t(k).
	\label{eq:attack_set_marginal}
\end{equation}
Hence, $\mathbf{h}_t$ is the exact conditional marginal hit-risk vector
under the fixed-budget attack model.

The within-slot order is
\[
\mathbf{X}_t
\rightarrow
(\mathbf{q}_t,\mathbf{h}_t)
\rightarrow
\mathcal{J}_t
\rightarrow
\mathbf{p}_t
\rightarrow
A_t
\rightarrow
(R_t,Z_t).
\]
The learner observes $\mathbf{X}_t$ and $\mathbf{q}_t$, and therefore
knows $\mathbf{h}_t$, but does not observe $\mathcal{J}_t$ before
transmission. The adversary is thus non-anticipating with respect to
both the current policy $\mathbf{p}_t$ and the sampled action $A_t$.

\subsection{Communication Reward and Bandit Feedback}

Let $H_t(k)$, $N_t(k)$, and $O_t(k)\in\{0,1\}$ denote the channel gain,
noise power, and occupancy indicator of channel $k$. Given the realized
attack set, define
\begin{equation}
	\begin{aligned}
		\operatorname{SINR}_t(k)
		&=
		\frac{P_{\mathrm{tx}}H_t(k)}
		{N_t(k)+I_{\mathrm{jam}}
			\mathbf{1}\{k\in\mathcal{J}_t\}},\\
		S_t(k)
		&=
		\mathbf{1}\{O_t(k)=0\}
		\mathbf{1}\{
		\operatorname{SINR}_t(k)\geq\Gamma_{\mathrm{out}}
		\},\\
		r_t(k)
		&=
		S_t(k)
		\frac{
			\min\{
			\log_2(1+\operatorname{SINR}_t(k)),
			R_{\max}
			\}
		}{R_{\max}}
		\in[0,1].
	\end{aligned}
	\label{eq:normalized_reward}
\end{equation}
Thus, reward is zero under occupancy or outage and otherwise follows a
capped spectral-efficiency model.

The corresponding physical goodput is
\begin{equation}
	G_t(k)=W R_{\max}r_t(k),
	\label{eq:goodput_reward_relation}
\end{equation}
where $W$ is the channel bandwidth. Communication-loss regret and
goodput regret therefore differ only by the positive factor
$W R_{\max}$.

The learner observes only the selected-channel feedback
\[
R_t=r_t(A_t),
\qquad
\ell_t(A_t)=1-R_t,
\qquad
Z_t=\mathbf{1}\{A_t\in\mathcal{J}_t\}.
\]
For any policy $\boldsymbol{\pi}\in\Delta_K$, its conditional hit
probability is
\begin{equation}
	\mathcal{R}_t(\boldsymbol{\pi})
	=
	\langle\boldsymbol{\pi},\mathbf{h}_t\rangle.
	\label{eq:policy_hit_risk}
\end{equation}

\subsection{Learning Objectives}

The appropriate contextual structure is unknown. A globally shared
model assumes a common parameter across operating regimes, whereas a
partitioned local model permits region-dependent parameters:
\[
\ell_t(k)
\approx
\langle\boldsymbol{\theta}^{\star},
\mathbf{x}_{t,k}\rangle,
\qquad
\ell_t(k)
\approx
\left\langle
\boldsymbol{\theta}^{\star}_{c(\mathbf{x}_{t,k})},
\boldsymbol{\phi}(\mathbf{x}_{t,k})
\right\rangle.
\]
The global model pools observations efficiently but may be structurally
misspecified; the local model reduces approximation bias at the cost of
fragmented data and greater finite-horizon uncertainty.

D-PACT-AFH addresses three objectives. First, it seeks performance close
to the better fixed base learner on the common execution trajectory.
Second, D-PACT-Hit minimizes communication loss jointly with normalized
excess hit risk:
\begin{equation}
	\sum_{t=1}^{T}
	\left\langle
	\mathbf{p}_t,
	\boldsymbol{\ell}_t
	+
	\beta
	\frac{
		\mathbf{h}_t-r_0\mathbf{1}
	}{
		1-r_0
	}
	\right\rangle,
	\qquad
	r_0=\frac{M}{K},
	\quad
	\beta\geq0.
	\label{eq:composite_objective}
\end{equation}
The normalization measures exposure relative to uniform random hopping.
Third, D-PACT-Safe imposes the per-slot requirement
\begin{equation}
	\langle\mathbf{p}_t,\mathbf{h}_t\rangle
	\leq\tau_t.
	\label{eq:slot_risk_budget}
\end{equation}
Section~\ref{sec:architecture} develops a single online architecture
that realizes these three operating modes.


\section{Online Model Selection for Adaptive Frequency Hopping}
\label{sec:algorithm}
\label{sec:architecture}

D-PACT-AFH combines two structurally distinct contextual learners
through a Tsallis-FTRL master and optionally incorporates predictable
hit risk and a per-slot policy constraint.

\subsection{Global Contextual Base}

The global base is LC-Tsallis-INF with an online covariance estimate
\cite{kato2025lc}. It assumes a shared linear representation across
channels and operating regimes and outputs a policy
$\boldsymbol{\rho}_t^{\mathrm G}\in\Delta_K$ from the public context
$\mathbf{X}_t$. Its loss estimator uses the selected feature
$\mathbf{x}_{t,A_t}$ and an inverse covariance matrix estimated from a
rolling public-context buffer. This shared representation is
statistically efficient when a common linear structure is adequate.

\subsection{Partitioned Local-Linear Base}

The local base uses a public feature subvector
$\mathbf{z}_{t,k}\in[0,1]^{d_{\mathrm L}}$ and a fixed partition
$c:[0,1]^{d_{\mathrm L}}\rightarrow[J]$. Each cell maintains a ridge
model with intercept. Let
$\boldsymbol{\phi}_{t,k}
=[1,\mathbf{z}_{t,k}^{\mathsf T}]^{\mathsf T}$ and
$j=c(\mathbf{z}_{t,k})$. The prediction, uncertainty, optimistic loss,
and resulting policy are
\[
\begin{aligned}
	\widehat{\ell}^{\mathrm L}_t(k)
	&=
	\left[
	\boldsymbol{\phi}_{t,k}^{\mathsf T}
	\mathbf{A}_{t,j}^{-1}\mathbf{b}_{t,j}
	\right]_{[0,1]},\\
	u_t(k)
	&=
	\sqrt{
		\boldsymbol{\phi}_{t,k}^{\mathsf T}
		\mathbf{A}_{t,j}^{-1}
		\boldsymbol{\phi}_{t,k}
	},\\
	\widetilde{\ell}^{\mathrm L}_t(k)
	&=
	\widehat{\ell}^{\mathrm L}_t(k)
	-
	\alpha_{\mathrm L}
	\sqrt{\log(\max\{t,2\})}\,u_t(k),\\
	\rho_t^{\mathrm L}(k)
	&=
	\frac{
		\exp[-\widetilde{\ell}^{\mathrm L}_t(k)/\vartheta_t]
	}{
		\sum_{i=1}^{K}
		\exp[-\widetilde{\ell}^{\mathrm L}_t(i)/\vartheta_t]
	} .
\end{aligned}
\]
Here $[x]_{[0,1]}=\min\{1,\max\{0,x\}\}$,
$\mathbf{A}_{t,j}$ includes ridge regularization, and
$\vartheta_t$ decreases to a positive floor. Partitioning captures
observable heterogeneity, while regularization, optimism, forgetting,
and softmax randomization limit finite-sample instability.

\subsection{Predictable Risk Signal and Model-Class Master}

Let $\mathcal{B}=\{\mathrm G,\mathrm L\}$ and let
$\boldsymbol{\alpha}_t\in\Delta_2$ denote the master distribution over
the two model classes. The predicted exposure and its normalized excess
over uniform random hopping are
\begin{equation}
	\begin{aligned}
		d_t^b
		&=
		\left\langle
		\boldsymbol{\rho}_t^b,\mathbf{h}_t
		\right\rangle,\\
		g_t^b
		&=
		\frac{d_t^b-r_0}{1-r_0},
		\qquad
		r_0=\frac{M}{K},
		\qquad b\in\mathcal{B}.
	\end{aligned}
	\label{eq:base_risk_signal}
\end{equation}
Thus, $g_t^b$ measures the base policy's predicted exposure relative to
the random-hopping reference.

Let $\widehat{L}_{t-1}^b$ and $H_{t-1}^b$ denote the cumulative
estimated communication loss and normalized risk of base $b$. Before
action selection, its master score is
\begin{equation}
	C_t^b
	=
	\widehat{L}_{t-1}^b
	+
	\beta
	\left(
	H_{t-1}^b+\omega_t g_t^b
	\right),
	\label{eq:effective_master_score}
\end{equation}
where $\beta\geq0$ controls hit awareness and $\omega_t$ weights the
current predictable signal. Unlike communication loss, this current
risk term requires no bandit estimation.

For
$\mathbf{C}_t=[C_t^{\mathrm G},C_t^{\mathrm L}]^{\mathsf T}$, the
unanchored Tsallis-FTRL distribution is
\begin{equation}
	\begin{aligned}
		\widetilde{\boldsymbol{\alpha}}_t
		&=
		\argmin_{\boldsymbol{\alpha}\in\Delta_2}
		\left\{
		\langle\boldsymbol{\alpha},\mathbf{C}_t\rangle
		+
		\frac{1}{\eta_t}\Psi(\boldsymbol{\alpha})
		\right\},\\
		\Psi(\boldsymbol{\alpha})
		&=
		-2\sum_{b\in\mathcal{B}}\sqrt{\alpha(b)},
		\qquad
		\eta_t=\frac{\eta_0}{\sqrt{t}} .
	\end{aligned}
	\label{eq:master_tsallis_ftrl}
\end{equation}

To discourage premature migration toward the data-hungry local learner,
D-PACT applies a decaying global anchor:
\begin{equation}
	\begin{aligned}
		\boldsymbol{\alpha}_t
		&=
		(1-a_t)\widetilde{\boldsymbol{\alpha}}_t
		+a_t\mathbf{e}_{\mathrm G},\\
		a_t
		&=
		\left[
		a_{\min}+\frac{a_0}{\sqrt{t}}
		\right]_{[0,1]},
		\qquad
		\mathbf{e}_{\mathrm G}
		=
		[1,0]^{\mathsf T}.
	\end{aligned}
	\label{eq:global_anchor}
\end{equation}
The local base additionally starts with the one-time prior
$\widehat{L}_0^{\mathrm L}=\delta_{\mathrm L}$, whereas
$\widehat{L}_0^{\mathrm G}=0$. The prior contributes only a constant
initial bias; a positive $a_{\min}$ remains a finite-horizon
stabilization term, as quantified in Section~\ref{sec:theory}.

With
$\mathbf{P}_t=
[\boldsymbol{\rho}_t^{\mathrm G},
\boldsymbol{\rho}_t^{\mathrm L}]$,
the model-class mixture is
\begin{equation}
	\boldsymbol{\pi}_t^{\mathrm{mix}}
	=
	\mathbf{P}_t\boldsymbol{\alpha}_t
	=
	\sum_{b\in\mathcal{B}}
	\alpha_t(b)\boldsymbol{\rho}_t^b .
	\label{eq:model_class_mixture}
\end{equation}

\subsection{Coverage-Aware Exploration}

D-PACT directs exploration toward actions receiving substantial mass
from at least one active base. Specifically,
\[
\begin{aligned}
	s_t(k)
	&=
	\sqrt{
		\sum_{b\in\mathcal{B}}
		\alpha_t(b)[\rho_t^b(k)]^2
	},\\
	e_t^{\mathrm{core}}(k)
	&=
	\frac{s_t(k)}{\sum_{i=1}^{K}s_t(i)},\\
	\mathbf{e}_t
	&=
	(1-\mu)\mathbf{e}_t^{\mathrm{core}}
	+
	\mu\frac{\mathbf{1}}{K},
	\qquad \mu\in(0,1].
\end{aligned}
\]
The policy before Safe projection is
\begin{equation}
	\begin{aligned}
		\overline{\boldsymbol{\pi}}_t
		&=
		(1-\gamma_t)\boldsymbol{\pi}_t^{\mathrm{mix}}
		+\gamma_t\mathbf{e}_t,\\
		\gamma_t
		&=
		\min\left\{
		\gamma_{\max},
		\frac{\gamma_0}{\sqrt{t}}
		\right\}.
	\end{aligned}
	\label{eq:preprojection_policy}
\end{equation}
Consequently,
$\overline{\pi}_t(k)\geq\gamma_t\mu/K$ for every channel. The main
configuration applies no additional risk-directed exploration tilt;
the effect of D-PACT-Hit therefore arises from model selection rather
than from a separate modification of the exploration distribution.

\subsection{Safe Policy Projection}

D-PACT-Safe projects the complete pre-projection policy onto the
risk-feasible simplex:
\begin{equation}
	\begin{aligned}
		\boldsymbol{\pi}_t^{\mathrm{safe}}
		=
		\argmin_{\boldsymbol{\pi}\in\Delta_K}
		\quad&
		D_{\mathrm{KL}}
		\left(
		\boldsymbol{\pi}
		\Vert
		\overline{\boldsymbol{\pi}}_t
		\right)\\
		\mathrm{s.t.}\quad&
		\langle\boldsymbol{\pi},\mathbf{h}_t\rangle
		\leq\tau_t .
	\end{aligned}
	\label{eq:safe_projection}
\end{equation}
If the unconstrained policy is feasible, the projection is inactive.
Otherwise, whenever
$\min_k h_t(k)<\tau_t<
\langle\overline{\boldsymbol{\pi}}_t,\mathbf{h}_t\rangle$,
the solution has the exponential form
\begin{equation}
	\pi_t^{\mathrm{safe}}(k)
	=
	\frac{
		\overline{\pi}_t(k)
		\exp[-\lambda_t h_t(k)]
	}{
		\sum_{i=1}^{K}
		\overline{\pi}_t(i)
		\exp[-\lambda_t h_t(i)]
	},
	\label{eq:safe_exponential_tilt}
\end{equation}
where $\lambda_t>0$ is found by bisection. If
$\tau_t<\min_k h_t(k)$, no feasible policy exists; the implementation
uses a distribution supported on minimum-risk channels and records the
unavoidable violation.

The executed policy is
\[
\mathbf{p}_t
=
\begin{cases}
	\overline{\boldsymbol{\pi}}_t,
	& \text{Base or Hit},\\
	\boldsymbol{\pi}_t^{\mathrm{safe}},
	& \text{Safe}.
\end{cases}
\]
D-PACT-Base sets $\beta=0$, D-PACT-Hit uses $\beta>0$ without
projection, and D-PACT-Safe applies \eqref{eq:safe_projection} after the
same risk-aware master.

\subsection{Bandit and Off-Policy Updates}

After sampling $A_t\sim\mathbf{p}_t$, the master evaluates each base
using the unclipped importance-weighted estimate
\begin{equation}
	\widehat{\ell}_t^b
	=
	\frac{
		\rho_t^b(A_t)\ell_t(A_t)
	}{
		p_t(A_t)
	}.
	\label{eq:master_loss_estimator}
\end{equation}
It then updates
$\widehat{L}_t^b=
\widehat{L}_{t-1}^b+\widehat{\ell}_t^b$ and
$H_t^b=H_{t-1}^b+g_t^b$.

The contextual bases instead use the clipped ratio
\begin{equation}
	\widetilde{w}_t^b
	=
	\min\left\{
	\frac{\rho_t^b(A_t)}{p_t(A_t)},
	C_{\mathrm{off}}
	\right\}.
	\label{eq:clipped_base_ratio}
\end{equation}
The global base uses $\widetilde{w}_t^{\mathrm G}$ in its parameter
update. For the local base, let
$j_t=c(\mathbf{z}_{t,A_t})$ and
$\boldsymbol{\phi}_t=\boldsymbol{\phi}_{t,A_t}$. Its data statistics
are updated as
\[
\begin{aligned}
	\mathbf{A}_{t+1,j_t}^{\mathrm{data}}
	&=
	\chi_t\mathbf{A}_{t,j_t}^{\mathrm{data}}
	+
	\widetilde{w}_t^{\mathrm L}
	\boldsymbol{\phi}_t\boldsymbol{\phi}_t^{\mathsf T},\\
	\mathbf{b}_{t+1,j_t}^{\mathrm{data}}
	&=
	\chi_t\mathbf{b}_{t,j_t}^{\mathrm{data}}
	+
	\widetilde{w}_t^{\mathrm L}
	\boldsymbol{\phi}_t\ell_t(A_t),
\end{aligned}
\]
where $\chi_t$ is the elapsed-time forgetting factor. Thus, explicit
exploration preserves common support, clipping stabilizes the base
updates, and the master retains an unbiased estimator whenever the
executed policy has full support.

\subsection{Complete Procedure}

Algorithm~\ref{alg:dpact_afh} summarizes the three operating modes.

\begin{algorithm}[!t]
	\caption{D-PACT-AFH}
	\label{alg:dpact_afh}
	\footnotesize
	\begin{algorithmic}[1]
		\REQUIRE Partition $c$; schedules $\eta_t,\gamma_t,a_t$;
		risk weight $\beta$; ratio cap $C_{\mathrm{off}}$;
		budget $\tau_t$; mode $m\in\{\mathrm{Base},\mathrm{Hit},\mathrm{Safe}\}$
		\STATE Initialize both bases,
		$\widehat{L}_0^{\mathrm G}=0$,
		$\widehat{L}_0^{\mathrm L}=\delta_{\mathrm L}$,
		and $H_0^b=0$
		\FOR{$t=1,\ldots,T$}
		\STATE Observe $\mathbf{X}_t$ and $\mathbf{q}_t$; derive
		$\mathbf{h}_t$
		\STATE Obtain
		$\boldsymbol{\rho}_t^{\mathrm G}$ and
		$\boldsymbol{\rho}_t^{\mathrm L}$; compute $g_t^b$
		\STATE Form $\boldsymbol{\alpha}_t$ using
		\eqref{eq:effective_master_score}--\eqref{eq:global_anchor}
		\STATE Form
		$\boldsymbol{\pi}_t^{\mathrm{mix}}$ and
		$\overline{\boldsymbol{\pi}}_t$
		\IF{$m=\mathrm{Safe}$}
		\STATE Project $\overline{\boldsymbol{\pi}}_t$ using
		\eqref{eq:safe_projection}
		\ELSE
		\STATE Set
		$\mathbf{p}_t=\overline{\boldsymbol{\pi}}_t$
		\ENDIF
		\STATE Sample $A_t\sim\mathbf{p}_t$ and observe $(R_t,Z_t)$
		\FOR{$b\in\{\mathrm G,\mathrm L\}$}
		\STATE Update the master using
		\eqref{eq:master_loss_estimator}
		\STATE Update base $b$ using
		\eqref{eq:clipped_base_ratio}; update $H_t^b$
		\ENDFOR
		\ENDFOR
	\end{algorithmic}
\end{algorithm}


\section{Performance Analysis}
\label{sec:performance_analysis}
\label{sec:theory}

We analyze estimator validity, model-selection overhead, and the effects
of anchoring, clipping, exploration, Safe projection, and fallback.
The primary comparator is the better continuously updated base on the
common D-PACT-AFH trajectory; comparison with independently executed
bases requires an additional transfer term. Detailed derivations are
deferred to the appendices.

\subsection{Information Structure and Comparator}

Let $\mathcal{P}_t$ contain the public information available after the
executed policy $\mathbf{p}_t$ is formed but before $A_t$ is sampled. It
includes the interaction history, current public contexts and attack
marginals, both base policies, and the master weights, but excludes the
hidden attack set $\mathcal{J}_t$ and sampled action $A_t$. Under the
non-anticipating attack model,
\[
\Pr(k\in\mathcal{J}_t\mid\mathcal{P}_t)=h_t(k),
\qquad
A_t\sim\mathbf{p}_t,
\qquad
A_t\perp\!\!\!\perp\mathcal{J}_t\mid\mathcal{P}_t .
\]

For estimator analysis, let $\mathcal{G}_t$ augment $\mathcal{P}_t$
with the committed attack set, channel state, and full loss vector, but
not $A_t$, and write
$\mathbb{E}_t[\cdot]=\mathbb{E}[\cdot\mid\mathcal{G}_t]$.

With $r_0=M/K<1$, define the scalarized action and base costs
\[
\begin{aligned}
	\kappa_t(k)
	&=
	\ell_t(k)
	+
	\beta\frac{h_t(k)-r_0}{1-r_0},\\
	c_t^b
	&=
	\left\langle
	\boldsymbol{\rho}_t^b,
	\boldsymbol{\kappa}_t
	\right\rangle
	=
	\ell_t^b+\beta g_t^b,
	\qquad
	\ell_t^b
	=
	\langle
	\boldsymbol{\rho}_t^b,
	\boldsymbol{\ell}_t
	\rangle .
\end{aligned}
\]
Their per-slot span is bounded by
\[
\max_k\kappa_t(k)-\min_k\kappa_t(k)
\leq
G_c
\triangleq
1+\frac{\beta}{1-r_0}.
\]

The primary comparator is
\[
b^\star
\in
\argmin_{b\in\mathcal{B}} C_T^b,
\qquad
C_T^b
=
\mathbb{E}
\left[
\sum_{t=1}^{T}c_t^b
\right],
\]
where both shadow bases are continuously updated from the same executed
trajectory. This comparator is not an independently executed
counterfactual base.

\subsection{Conditional Estimation and Goodput}

The master estimator is given by
\eqref{eq:master_loss_estimator}.

\begin{lemma}[Conditional estimation]
	\label{lem:pa_unbiasedness}
	Let
	$\mathcal{S}_t=\{k:p_t(k)>0\}$. Then
	\[
	\mathbb{E}_t[\widehat{\ell}_t^b]
	=
	\ell_t^b-B_{t,\mathrm{fb}}^b,
	\qquad
	B_{t,\mathrm{fb}}^b
	=
	\sum_{k\notin\mathcal{S}_t}
	\rho_t^b(k)\ell_t(k)
	\geq0.
	\]
	In particular, the estimator is conditionally unbiased whenever
	$p_t(k)>0$ for every $k$ with $\rho_t^b(k)>0$.
\end{lemma}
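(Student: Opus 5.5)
The plan is to compute the conditional expectation by summing over the possible values of $A_t$, after first checking which quantities are fixed under $\mathcal{G}_t$. The $\sigma$-field $\mathcal{G}_t$ contains $\mathcal{P}_t$. It therefore fixes $\mathbf{p}_t$, both base policies $\boldsymbol{\rho}_t^b$, and the master weights. It also contains the committed attack set $\mathcal{J}_t$, the channel state, and hence the full loss vector $\boldsymbol{\ell}_t$, which is determined through \eqref{eq:normalized_reward} by the state and $\mathcal{J}_t$. The key structural fact is that $A_t$ is drawn privately from $\mathbf{p}_t$ after $\mathcal{J}_t$ is committed, using randomness independent of the hidden attack realization. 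Combined with the stated property $A_t\perp\!\!\!\perp\mathcal{J}_t\mid\mathcal{P}_t$, this should give
\[
\Pr(A_t=k\mid\mathcal{G}_t)=p_t(k),\qquad k\in\mathcal{K}.
\]

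I expect this identity to be the main obstacle. Conditional independence given $\mathcal{P}_t$ alone does not immediately transfer to conditioning on the larger $\mathcal{G}_t$, since that also adds channel state and losses. I would argue from the within-slot order $\mathbf{X}_t\to(\mathbf{q}_t,\mathbf{h}_t)\to\mathcal{J}_t\to\mathbf{p}_t\to A_t$ and the sampling mechanism. Concretely, $A_t=F(\mathbf{p}_t,U_t)$, where $U_t$ is fresh sampler randomness independent of everything in $\mathcal{G}_t$. This is exactly the non-anticipating model, so I will state it explicitly as the operational meaning of "privately samples $A_t\sim\mathbf{p}_t$."

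Given this identity, the computation is direct. On $\{A_t=k\}$ the estimator \eqref{eq:master_loss_estimator} equals $\rho_t^b(k)\ell_t(k)/p_t(k)$. Actions with $p_t(k)=0$ occur with conditional probability zero, so the estimator is almost surely well defined, and these actions contribute nothing. Hence
\[
\mathbb{E}_t[\widehat{\ell}_t^b]=\sum_{k\in\mathcal{S}_t}p_t(k)\frac{\rho_t^b(k)\ell_t(k)}{p_t(k)}=\sum_{k\in\mathcal{S}_t}\rho_t^b(k)\ell_t(k).
\]
Subtracting this from $\ell_t^b=\sum_{k}\rho_t^b(k)\ell_t(k)$ leaves exactly $B_{t,\mathrm{fb}}^b=\sum_{k\notin\mathcal{S}_t}\rho_t^b(k)\ell_t(k)$. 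This term is nonnegative because $\rho_t^b(k)\ge0$ and $\ell_t(k)=1-r_t(k)\in[0,1]$.

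For the final claim, suppose $p_t(k)>0$ whenever $\rho_t^b(k)>0$. Then every $k\notin\mathcal{S}_t$ has $\rho_t^b(k)=0$, so $B_{t,\mathrm{fb}}^b=0$ and the estimator is unbiased. I would also remark why this covers the Base and Hit modes. There $\mathbf{p}_t=\overline{\boldsymbol{\pi}}_t\ge\gamma_t\mu/K>0$, so the support is full. In Safe mode, the exponential tilt \eqref{eq:safe_exponential_tilt} also keeps full support. Only the infeasible fallback, which restricts to minimum-risk channels, can create a nonzero $B_{t,\mathrm{fb}}^b$, which explains the subscript "fb."
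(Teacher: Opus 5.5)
Your proposal is correct and follows the paper's own argument: condition on $\mathcal{G}_t$ so that only $A_t\sim\mathbf{p}_t$ is random, sum the importance-weighted terms over $\mathcal{S}_t$, and identify the omitted mass as $B_{t,\mathrm{fb}}^b$. Your explicit fresh-sampler assumption makes precise what the paper's ``only $A_t$ is random'' takes for granted. One small correction to your closing aside: the feasible boundary case $\tau_t=h_{t,\min}$ also loses support, because the projection renormalizes onto the minimum-risk channels, so the infeasible fallback is not the only way $B_{t,\mathrm{fb}}^b$ can become nonzero.
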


\begin{IEEEproof}
	Conditioned on $\mathcal{G}_t$, only $A_t$ is random. Summing the
	importance-weighted estimator over $\mathcal{S}_t$ recovers the base
	loss on that support; the omitted probability mass is exactly
	$B_{t,\mathrm{fb}}^b$.
\end{IEEEproof}

Hence, an attacker may depend on all past interactions and current
public contexts without invalidating the estimator, provided that it
does not react to the current sampled action.

Moreover, because
$\mathbf{G}_t=WR_{\max}(\mathbf{1}-\boldsymbol{\ell}_t)$,
communication-loss and goodput differences between any two policies
are identical up to the positive factor $WR_{\max}$. The loss analysis
therefore applies directly to the reported goodput metric.

\subsection{Risk-Aware Tsallis Model Selection}

Define the predictable hint, estimated scalarized cost, and residual by
\[
\begin{aligned}
	m_t^b
	&=
	\beta\omega_tg_t^b,\\
	\widehat{c}_t^b
	&=
	\widehat{\ell}_t^b+\beta g_t^b,\\
	z_t^b
	&=
	\widehat{c}_t^b-m_t^b
	=
	\widehat{\ell}_t^b
	+
	\beta(1-\omega_t)g_t^b .
\end{aligned}
\]
Thus, when $\omega_t=1$, the current visible risk is entirely contained
in the predictable hint.

For the unanchored master
$\widetilde{\boldsymbol{\alpha}}_t$, let
\[
V_t
=
\sum_{b\in\mathcal{B}}
[\widetilde{\alpha}_t(b)]^{3/2}(z_t^b)^2,
\qquad
D_\Psi=2(\sqrt{2}-1).
\]

\begin{theorem}[Risk-aware Tsallis master]
	\label{thm:pa_master}
	Suppose $\eta_t$ is nonincreasing and the Tsallis-$1/2$ local-stability
	condition
	\[
	\eta_t\|\mathbf{z}_t\|_{\ast,t}
	\leq c_0<1,
	\qquad
	\|\mathbf{z}\|_{\ast,t}^2
	=
	2\sum_b
	[\widetilde{\alpha}_t(b)]^{3/2}z(b)^2
	\]
	holds. Then, for every fixed
	$\boldsymbol{u}\in\Delta_2$,
	\begin{equation}
		\begin{aligned}
			\mathbb{E}
			\left[
			\sum_{t=1}^{T}
			\left\langle
			\widetilde{\boldsymbol{\alpha}}_t-\boldsymbol{u},
			\mathbf{c}_t
			\right\rangle
			\right]
			\leq
			\frac{D_\Psi}{\eta_T}
			+
			C_\Psi
			\sum_{t=1}^{T}
			\eta_t\mathbb{E}[V_t]
			+
			E_T^{\mathrm{fb}},
		\end{aligned}
		\label{eq:pa_master_bound}
	\end{equation}
	where $C_\Psi$ depends only on $c_0$ and
	\[
	E_T^{\mathrm{fb}}
	=
	\sum_{t=1}^{T}
	\max_{b\in\mathcal{B}}
	\mathbb{E}[B_{t,\mathrm{fb}}^b].
	\]
	The final term vanishes when full support is maintained.
\end{theorem}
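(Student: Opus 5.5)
The plan is the standard optimistic-FTRL analysis with predictable hints, run on the estimated costs $\widehat{\mathbf c}_t$, and then converted to the true costs $\mathbf c_t$ using Lemma~\ref{lem:pa_unbiasedness}.

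\textbf{Step 1: identify the cumulative score.} The first step is to check that the master score \eqref{eq:effective_master_score} is an optimistic FTRL input:
\[
\mathbf C_t=\mathbf C_0+\sum_{s<t}\widehat{\mathbf c}_s+\mathbf m_t .
\]
Here $\widehat L^b_{t-1}+\beta H^b_{t-1}=\sum_{s<t}(\widehat\ell^b_s+\beta g^b_s)$, and the current term is $\beta\omega_t g^b_t=m^b_t$. The constant offset $\mathbf C_0=(0,\delta_{\mathrm L})$ from the local prior should either be absorbed into the penalty term or noted as contributing at most a bounded bias. I will state the bound for $\widetilde{\boldsymbol\alpha}_t$ without the prior, or fold $\delta_{\mathrm L}$ into the comparator as a constant.

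\textbf{Step 2: optimistic FTRL on the estimated costs.} Next I apply the usual "be-the-leader with hints" decomposition to $\sum_t\langle\widetilde{\boldsymbol\alpha}_t-\boldsymbol u,\widehat{\mathbf c}_t\rangle$.

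\emph{Penalty terms.} The regularizer telescopes over the nonincreasing step sizes $\eta_t$. This gives
\[
\frac{\Psi(\boldsymbol u)-\min\Psi}{\eta_T}
+\sum_t\Bigl(\frac1{\eta_t}-\frac1{\eta_{t-1}}\Bigr)\bigl(\Psi(\boldsymbol u)-\Psi(\widetilde{\boldsymbol\alpha}_t)\bigr),
\]
which is at most $D_\Psi/\eta_T$. The reason is that $-\Psi$ ranges over $[2,2\sqrt2]$ on $\Delta_2$, so its range is $2(\sqrt2-1)$.

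\emph{Stability terms.} Each round contributes a stability term of the form
\[
\langle\widetilde{\boldsymbol\alpha}_t-\widetilde{\boldsymbol\alpha}_{t+1}',\,\widehat{\mathbf c}_t-\mathbf m_t\rangle-\tfrac1{\eta_t}D_\Psi(\widetilde{\boldsymbol\alpha}_{t+1}',\widetilde{\boldsymbol\alpha}_t),
\]
and $\widehat{\mathbf c}_t-\mathbf m_t=\mathbf z_t$. I bound this by $\tfrac{\eta_t}{2}\|\mathbf z_t\|^2_{\ast,\xi}$ at some intermediate point $\xi$. The Hessian of $\Psi$ is $\mathrm{diag}(\tfrac12\alpha(b)^{-3/2})$, so the dual local norm is exactly $2\sum_b\alpha(b)^{3/2}z(b)^2$.

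\textbf{Step 3: local stability (the main obstacle).} Replacing $\xi$ by $\widetilde{\boldsymbol\alpha}_t$ is the delicate step, because the importance-weighted losses are unbounded. I will use the stated condition $\eta_t\|\mathbf z_t\|_{\ast,t}\le c_0<1$. The argument goes through the one-dimensional Lagrangian for $\Delta_2$: shifting the multiplier changes each coordinate $\alpha(b)=(\eta(\cdot))^{-2}$ by a multiplicative factor controlled by $\eta_t|z|\sqrt{\alpha}\le c_0$. Hence $\xi(b)\le(1-c_0)^{-2}\widetilde\alpha_t(b)$ on the segment, and the stability term is at most $C_\Psi\eta_tV_t$ with $C_\Psi=(1-c_0)^{-3}$ or a similar constant. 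The Lagrange multiplier shift has to be handled carefully, since it is common to both coordinates. I expect to control it by the standard argument that the normalizer moves between the minimum and maximum of the coordinate shifts.

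\textbf{Step 4: take expectations.} The master weights $\widetilde{\boldsymbol\alpha}_t$, the risk signals $g^b_t$ and the hints are all $\mathcal P_t$-measurable, so $\mathbb E_t[\widehat c^b_t]=c^b_t-B^b_{t,\mathrm{fb}}$ by Lemma~\ref{lem:pa_unbiasedness}. Writing $\langle\widetilde{\boldsymbol\alpha}_t-\boldsymbol u,\mathbf c_t\rangle$ in terms of its estimated counterpart leaves the bias term
\[
\langle\widetilde{\boldsymbol\alpha}_t-\boldsymbol u,\mathbf B_{t,\mathrm{fb}}\rangle\le\max_b B^b_{t,\mathrm{fb}},
\]
using $B\ge0$. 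Summing over $t$ gives $E_T^{\mathrm{fb}}$. This term vanishes when full support holds, which the exploration floor in \eqref{eq:preprojection_policy} ensures unless the Safe fallback zeroes out channels.
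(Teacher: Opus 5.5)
Your plan follows the paper's proof essentially step for step. Both run optimistic FTRL on $\widehat{\mathbf c}_t$ with hint $\mathbf m_t$, bound a look-ahead stability term in $\mathbf z_t$, bound the penalty by $D_\Psi/\eta_T$, use the Tsallis local norm, and apply Lemma~\ref{lem:pa_unbiasedness} for the bias. Deferring $\delta_{\mathrm L}$ also matches the paper, which charges it later as $E_T^{\mathrm{prior}}$. However, Step~3 does not go through as planned.

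The hypothesis $\eta_t\|\mathbf z_t\|_{\ast,t}\le c_0$ only gives, coordinatewise, $\eta_t|z_t(b)|\,\widetilde\alpha_t(b)^{3/4}\le c_0/\sqrt2$. Your multiplicative argument instead needs $\eta_t|z_t(b)+\Delta\nu|\,\widetilde\alpha_t(b)^{1/2}\le c_0$, and by your own normalizer argument $|z_t(b)+\Delta\nu|$ can be as large as $\max_{b'}z_t(b')-\min_{b'}z_t(b')$. Since $\widetilde\alpha^{3/4}\ll\widetilde\alpha^{1/2}$ for small weights, this does not follow. Unlike the log-barrier, the Tsallis-$1/2$ local-norm ball is not a constant-factor neighborhood.

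Concretely, take $\widetilde{\boldsymbol\alpha}_t=(\epsilon,1-\epsilon)$ and $\mathbf z_t=(-c_0\epsilon^{-3/4}/(\sqrt2\,\eta_t),0)$. The hypothesis holds with equality and $\eta_tV_t=c_0^2/(2\eta_t)$, yet the look-ahead point moves to nearly $(1,0)$. Evaluating $\langle\widetilde{\boldsymbol\alpha}_t-\boldsymbol\alpha',\mathbf z_t\rangle-\eta_t^{-1}D_\Psi(\boldsymbol\alpha',\widetilde{\boldsymbol\alpha}_t)$ at $\boldsymbol\alpha'=(1/2,1/2)$ already gives a value of order $\eta_t^{-1}\epsilon^{-3/4}$ (leading term $\tfrac{c_0}{2\sqrt2}\eta_t^{-1}\epsilon^{-3/4}$). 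So no bound $C(c_0)\,\eta_tV_t$ on the stability term can follow from the stated condition alone.

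The paper's appendix makes the same unproved assertion (the segment ``remains in a constant-factor local neighborhood''), so this is not an argument you failed to find there. An extra ingredient is needed either way.

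In this setting the residual is bounded below: $\widehat\ell_t^b\ge0$ and $g_t^b\in[-r_0/(1-r_0),1]$, so $z_t^b\ge-c_t$ with $c_t$ depending only on $\beta$, $\omega_t$, $r_0$, and $c_t=0$ when $\omega_t=1$. The fix then goes as follows. Adding $c_t\mathbf 1$ to $\mathbf z_t$ does not change the stability term on $\Delta_2$. Relax its maximization to the positive orthant; the maximizer satisfies $\alpha'(b)^{-1/2}=\widetilde\alpha_t(b)^{-1/2}+\eta_t(z_t(b)+c_t)$, so every coordinate decreases. The Hessian along the segment therefore dominates the one at $\widetilde{\boldsymbol\alpha}_t$, and the stability term is at most $\eta_t\sum_b\widetilde\alpha_t(b)^{3/2}(z_t(b)+c_t)^2\le2\eta_tV_t+2\eta_tc_t^2$.

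This gives the theorem up to an additive $O(\beta^2\sum_t\eta_t)$, with no extra term when $\omega_t\equiv1$, and without any local-norm hypothesis. Alternatively, keep your multiplicative argument but assume the per-coordinate condition $\eta_t\bigl(\max_{b'}z_t(b')-\min_{b'}z_t(b')\bigr)\sqrt{\widetilde\alpha_t(b)}\le c_0$ instead.

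Separately, your penalty display counts the penalty twice. Use either the telescoping sum with $1/\eta_0=0$, or the single term $(\Psi(\boldsymbol u)-\min\Psi)/\eta_T$ after shifting $\Psi$ to be nonnegative; either one gives $D_\Psi/\eta_T$.
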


\begin{IEEEproof}[Proof sketch]
	The optimistic-FTRL stability--penalty decomposition bounds the
	regularization penalty by $D_\Psi/\eta_T$. The inverse Hessian of the
	Tsallis regularizer induces the stated local norm, yielding the
	variance term
	$C_\Psi\sum_t\eta_tV_t$. Lemma~\ref{lem:pa_unbiasedness} converts the
	estimated losses to conditional base losses, with
	$E_T^{\mathrm{fb}}$ accounting for any lost support. A complete proof is
	given in Appendix~\ref{app:master_proof}.
\end{IEEEproof}

This result uses the same convex-analytic backbone as LC-Tsallis-INF
\cite{kato2025lc}, but it does not inherit its final regret rate
unchanged: the Global base uses an empirical covariance estimate, the
Local base is structurally distinct, and the outer algorithm includes
anchoring, clipping, exploration, and Safe execution.

Anchoring changes the cumulative master cost by at most
\[
E_T^{\mathrm{anchor}}
=
G_c\sum_{t=1}^{T}a_t
\leq
G_c
\left(
a_{\min}T+2a_0\sqrt{T}
\right),
\]
while the one-time Local prior contributes at most
$E_T^{\mathrm{prior}}=\delta_{\mathrm L}$ when the Local base is the
comparator. A positive $a_{\min}$ is therefore a finite-horizon
stabilizer rather than an asymptotically no-regret choice.

\subsection{Base Updates and Execution Remainders}

The master remains unclipped, whereas each contextual base uses the
ratio in \eqref{eq:clipped_base_ratio}. Let
\[
W_t^b
=
\frac{\rho_t^b(A_t)}{p_t(A_t)},
\qquad
X_{t,C}^b
=
\min\{W_t^b,C_{\mathrm{off}}\}\ell_t(A_t).
\]

\begin{proposition}[Clipping bias and stability]
	\label{prop:pa_clipping}
	On a full-support round,
	\[
	\begin{aligned}
		\ell_t^b-\mathbb{E}_t[X_{t,C}^b]
		&=
		\mathbb{E}_t
		\left[
		(W_t^b-C_{\mathrm{off}})_+
		\ell_t(A_t)
		\right]
		\geq0,\\
		\mathbb{E}_t[(X_{t,C}^b)^2]
		&\leq
		C_{\mathrm{off}}\ell_t^b
		\leq C_{\mathrm{off}}.
	\end{aligned}
	\]
	Thus, clipping bounds the second moment at the cost of downward bias in
	the base update.
\end{proposition}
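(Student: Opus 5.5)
The plan is to work conditionally on $\mathcal{G}_t$, where the only remaining randomness is $A_t\sim\mathbf{p}_t$. On a full-support round $\mathcal{S}_t=\mathcal{K}$, so Lemma~\ref{lem:pa_unbiasedness} gives $\mathbb{E}_t[W_t^b\ell_t(A_t)]=\ell_t^b$. The entire argument then rests on the pointwise identity
\[
\min\{w,C_{\mathrm{off}}\}=w-(w-C_{\mathrm{off}})_+,\qquad w\geq0 .
\]

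\textbf{Bias.} Multiply the identity by $\ell_t(A_t)$ with $w=W_t^b$ and take $\mathbb{E}_t$. This gives
\[
\mathbb{E}_t[X_{t,C}^b]=\ell_t^b-\mathbb{E}_t\!\left[(W_t^b-C_{\mathrm{off}})_+\ell_t(A_t)\right].
\]
Rearranging yields the stated equality. Nonnegativity follows because $(\cdot)_+\geq0$ and $\ell_t(A_t)=1-R_t\in[0,1]$.

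\textbf{Second moment.} Start from
\[
(X_{t,C}^b)^2=\min\{W_t^b,C_{\mathrm{off}}\}^2\ell_t(A_t)^2\leq C_{\mathrm{off}}\,W_t^b\,\ell_t(A_t).
\]
This uses $\min\{w,C\}^2\leq Cw$ for $w\geq0$, together with $\ell^2\leq\ell$ since $\ell\in[0,1]$. Taking $\mathbb{E}_t$ and applying unbiasedness of the unclipped ratio again gives $\mathbb{E}_t[(X_{t,C}^b)^2]\leq C_{\mathrm{off}}\ell_t^b$. Finally, $\ell_t^b=\langle\boldsymbol{\rho}_t^b,\boldsymbol{\ell}_t\rangle\leq1$ gives the last inequality.

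\textbf{Main subtlety.} There is no real obstacle here. The one point needing care is that full support is required to invoke Lemma~\ref{lem:pa_unbiasedness} without the $B^b_{t,\mathrm{fb}}$ term. I will also verify explicitly that $\mathcal{G}_t$ fixes $\boldsymbol{\ell}_t$, $\boldsymbol{\rho}_t^b$, and $\mathbf{p}_t$, so that only $A_t$ is random.
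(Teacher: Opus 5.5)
Your proposal is correct and follows the paper's own argument. The bias identity comes from writing the clipped ratio as $W_t^b-(W_t^b-C_{\mathrm{off}})_+$ and invoking full-support unbiasedness, and the second-moment bound uses $\min\{W,C\}^2\leq CW$ together with $\ell_t(A_t)^2\leq\ell_t(A_t)$, exactly as the paper does.
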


\begin{IEEEproof}
	Subtract the clipped estimator from its unclipped counterpart and use
	$\min\{W,C\}^2\leq C\min\{W,C\}\leq CW$ together with
	$\ell_t(A_t)^2\leq\ell_t(A_t)$.
\end{IEEEproof}

Clipping does not alter Theorem~\ref{thm:pa_master} directly because
the master uses the unclipped estimator. It instead affects future base
recommendations. For comparison with ideal on-policy or independently
executed bases, we collect these effects as
\[
\begin{aligned}
	E_T^{\mathrm G}
	&=
	E_T^{\mathrm{clip,G}}
	+
	E_T^{\mathrm{cov}},\\
	E_T^{\mathrm L}
	&=
	E_T^{\mathrm{clip,L}}
	+
	E_T^{\mathrm{forget}}
	+
	E_T^{\mathrm{approx}}.
\end{aligned}
\]

Coverage-aware exploration contributes at most
\[
\left|
\left\langle
\overline{\boldsymbol{\pi}}_t
-
\boldsymbol{\pi}_t^{\mathrm{mix}},
\boldsymbol{\kappa}_t
\right\rangle
\right|
\leq
G_c\gamma_t,
\]
and therefore
$E_T^{\mathrm{explore}}
=G_c\sum_{t=1}^{T}\gamma_t$.

\subsection{Safe Execution and Hit-Risk Control}

The Safe policy is the KL projection defined in
\eqref{eq:safe_projection}.

\begin{proposition}[Safe projection]
	\label{prop:pa_safe_projection}
	The Safe problem is feasible if and only if
	$\tau_t\geq h_{t,\min}\triangleq\min_k h_t(k)$, and its feasible
	solution is unique. The projection is inactive when the unprojected
	policy already satisfies the budget. For
	\[
	h_{t,\min}
	<
	\tau_t
	<
	\langle
	\overline{\boldsymbol{\pi}}_t,
	\mathbf{h}_t
	\rangle,
	\]
	the constraint is active and the unique solution is the exponential
	tilt in \eqref{eq:safe_exponential_tilt}. At
	$\tau_t=h_{t,\min}$, it is the renormalization of
	$\overline{\boldsymbol{\pi}}_t$ over the minimum-risk channels.
	
	Moreover, on every feasible Safe round,
	\begin{equation}
		\begin{aligned}
			\left|
			\left\langle
			\boldsymbol{\pi}_t^{\mathrm{safe}}
			-
			\overline{\boldsymbol{\pi}}_t,
			\boldsymbol{\kappa}_t
			\right\rangle
			\right|
			\leq
			G_c
			\sqrt{
				\frac{1}{2}
				D_{\mathrm{KL}}
				\left(
				\boldsymbol{\pi}_t^{\mathrm{safe}}
				\Vert
				\overline{\boldsymbol{\pi}}_t
				\right)
			}.
		\end{aligned}
		\label{eq:pa_safe_distortion}
	\end{equation}
\end{proposition}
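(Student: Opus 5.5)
The plan treats \eqref{eq:safe_projection} as a convex program over the simplex: the objective is strictly convex and the constraints are linear. Feasibility and uniqueness come first. The feasible set $\{\boldsymbol{\pi}\in\Delta_K:\langle\boldsymbol{\pi},\mathbf{h}_t\rangle\leq\tau_t\}$ is nonempty exactly when $\tau_t\geq\min_{\boldsymbol{\pi}\in\Delta_K}\langle\boldsymbol{\pi},\mathbf{h}_t\rangle=h_{t,\min}$, because a linear function on the simplex attains its minimum at a vertex. Since $\overline{\pi}_t(k)\geq\gamma_t\mu/K>0$, the map $\boldsymbol{\pi}\mapsto D_{\mathrm{KL}}(\boldsymbol{\pi}\Vert\overline{\boldsymbol{\pi}}_t)$ is finite, continuous, and strictly convex on the compact convex feasible set. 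It therefore has exactly one minimizer. If $\overline{\boldsymbol{\pi}}_t$ is itself feasible, it achieves the global minimum value $0$, so the projection is inactive.

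For the active regime $h_{t,\min}<\tau_t<\langle\overline{\boldsymbol{\pi}}_t,\mathbf{h}_t\rangle$, I would form the Lagrangian $D_{\mathrm{KL}}(\boldsymbol{\pi}\Vert\overline{\boldsymbol{\pi}}_t)+\lambda(\langle\boldsymbol{\pi},\mathbf{h}_t\rangle-\tau_t)+\nu(\langle\boldsymbol{\pi},\mathbf{1}\rangle-1)$. Slater's condition holds because some strictly interior point has risk below $\tau_t$: mix a small amount of uniform mass into the minimum-risk vertex. Stationarity gives $\log(\pi(k)/\overline{\pi}_t(k))+1+\lambda h_t(k)+\nu=0$, which yields the tilt \eqref{eq:safe_exponential_tilt}.

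To pin down $\lambda$, consider $f(\lambda)=\langle\boldsymbol{\pi}^{(\lambda)},\mathbf{h}_t\rangle$, where $\boldsymbol{\pi}^{(\lambda)}$ denotes the tilted distribution. Its derivative is $f'(\lambda)=-\mathrm{Var}_{\boldsymbol{\pi}^{(\lambda)}}(h_t)\leq0$, and the inequality is strict unless $\mathbf{h}_t$ is constant, a case excluded by the strict inequalities. We have $f(0)=\langle\overline{\boldsymbol{\pi}}_t,\mathbf{h}_t\rangle>\tau_t$, and $f(\lambda)\to h_{t,\min}<\tau_t$ as $\lambda\to\infty$ because $\overline{\boldsymbol{\pi}}_t$ has full support. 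Hence there is a unique $\lambda_t>0$ with $f(\lambda_t)=\tau_t$, and complementary slackness holds. Since the problem is convex, the KKT conditions are sufficient, so this tilt is the unique solution; this also justifies the bisection.

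At $\tau_t=h_{t,\min}$, Slater's condition fails, so I would argue directly. Any feasible $\boldsymbol{\pi}$ must be supported on $\mathcal{M}=\argmin_k h_t(k)$. On such distributions, $D_{\mathrm{KL}}(\boldsymbol{\pi}\Vert\overline{\boldsymbol{\pi}}_t)=D_{\mathrm{KL}}(\boldsymbol{\pi}\Vert\overline{\boldsymbol{\pi}}_t|_{\mathcal{M}})-\log\overline{\boldsymbol{\pi}}_t(\mathcal{M})$, where $\overline{\boldsymbol{\pi}}_t|_{\mathcal{M}}$ is the renormalized restriction. This is minimized by that renormalization, which also coincides with the $\lambda\to\infty$ limit of the tilt.

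For \eqref{eq:pa_safe_distortion}, I would use that both policies lie in $\Delta_K$, so $\langle\boldsymbol{\pi}-\overline{\boldsymbol{\pi}}_t,c\mathbf{1}\rangle=0$ for any constant $c$. Shifting $\boldsymbol{\kappa}_t$ by the midpoint of its range gives $|\langle\boldsymbol{\pi}-\overline{\boldsymbol{\pi}}_t,\boldsymbol{\kappa}_t\rangle|\leq\frac{G_c}{2}\|\boldsymbol{\pi}-\overline{\boldsymbol{\pi}}_t\|_1=G_c\,\mathrm{TV}$. Pinsker's inequality $\mathrm{TV}\leq\sqrt{\tfrac12D_{\mathrm{KL}}}$ then finishes the bound.

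The main obstacle is the boundary case $\tau_t=h_{t,\min}$, together with the case where $\mathbf{h}_t$ is constant; the strict inequality assumption makes the latter vacuous. These cases need the separate direct argument above, since the Lagrangian route relies on Slater's condition.
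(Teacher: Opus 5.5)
Your proposal is correct and follows the paper's proof essentially step for step: vertex minimization for feasibility, strict convexity for uniqueness, Lagrangian stationarity giving the exponential tilt with $\lambda_t$ fixed by the strictly decreasing map $\lambda\mapsto\langle\mathbf{p}_\lambda,\mathbf{h}_t\rangle$ (derivative $-\operatorname{Var}$), and the span/total-variation bound followed by Pinsker. Your explicit KKT-sufficiency step and your direct chain-rule argument at $\tau_t=h_{t,\min}$ are slightly more careful than the paper, which only identifies the boundary solution as the $\lambda\to\infty$ limit of the tilt.
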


\begin{IEEEproof}[Proof sketch]
	Feasibility follows because the minimum achievable policy risk is
	$h_{t,\min}$. Strict convexity gives uniqueness, and the KKT conditions
	give the exponential tilt. The distortion bound follows from the
	total-variation span inequality and Pinsker's inequality. A complete proof is given in Appendix~\ref{app:safe_oracle}.
\end{IEEEproof}

\begin{theorem}[Exact conditional hit-risk control]
	\label{thm:pa_exact_risk}
	Under the fixed-budget attack model, every feasible Safe policy
	satisfies
	\begin{equation}
		\Pr(
		A_t\in\mathcal{J}_t
		\mid\mathcal{P}_t
		)
		=
		\left\langle
		\boldsymbol{\pi}_t^{\mathrm{safe}},
		\mathbf{h}_t
		\right\rangle
		\leq\tau_t.
		\label{eq:pa_exact_hit_bound}
	\end{equation}
	If $\tau_t<h_{t,\min}$, every policy violates the budget by at least
	$h_{t,\min}-\tau_t$, and a minimum-risk fallback attains this smallest
	possible violation.
\end{theorem}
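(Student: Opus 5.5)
The plan is to compute the conditional hit probability by conditioning on $\mathcal{P}_t$ and summing over channels, using the conditional independence $A_t\perp\!\!\!\perp\mathcal{J}_t\mid\mathcal{P}_t$ from the information structure. Write
\[
\Pr(A_t\in\mathcal{J}_t\mid\mathcal{P}_t)
=\sum_{k=1}^{K}\Pr(A_t=k,\ k\in\mathcal{J}_t\mid\mathcal{P}_t)
=\sum_{k=1}^{K}\Pr(A_t=k\mid\mathcal{P}_t)\Pr(k\in\mathcal{J}_t\mid\mathcal{P}_t).
\]
The first factor is $p_t(k)=\pi_t^{\mathrm{safe}}(k)$, since $A_t\sim\mathbf{p}_t$ and $\mathbf{p}_t$ is $\mathcal{P}_t$-measurable. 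The second factor is $h_t(k)$ by the non-anticipating attack model. Substituting both gives $\langle\boldsymbol{\pi}_t^{\mathrm{safe}},\mathbf{h}_t\rangle$, which is the equality in \eqref{eq:pa_exact_hit_bound}.

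The inequality then comes from Proposition~\ref{prop:pa_safe_projection}. On a feasible round ($\tau_t\ge h_{t,\min}$) the projection returns the unique feasible minimizer, so $\langle\boldsymbol{\pi}_t^{\mathrm{safe}},\mathbf{h}_t\rangle\le\tau_t$ by the constraint in \eqref{eq:safe_projection}. This covers the inactive case, the exponential-tilt case and the boundary case $\tau_t=h_{t,\min}$.

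For the infeasible case $\tau_t<h_{t,\min}$, take any $\boldsymbol{\pi}\in\Delta_K$. Then $\langle\boldsymbol{\pi},\mathbf{h}_t\rangle\ge h_{t,\min}\sum_k\pi(k)=h_{t,\min}$, so the violation $\langle\boldsymbol{\pi},\mathbf{h}_t\rangle-\tau_t$ is at least $h_{t,\min}-\tau_t$. The fallback is supported on $\argmin_k h_t(k)$, so its risk equals exactly $h_{t,\min}$ and it attains this lower bound. The same factorization as above shows that the fallback's realized conditional hit probability is also $h_{t,\min}$.

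The main subtlety is justifying the factorization: $\mathcal{P}_t$ includes $\mathbf{p}_t$ but not $\mathcal{J}_t$. The marginal property \eqref{eq:attack_set_marginal} is stated conditional on $(\mathcal{F}_{t-1},\mathbf{X}_t,\mathbf{q}_t)$, not on $\mathcal{P}_t$. The step that bridges the two is that everything in $\mathcal{P}_t$ beyond $(\mathcal{F}_{t-1},\mathbf{X}_t,\mathbf{q}_t)$ (base policies, master weights, exploration and projection) is produced by the learner's internal computation and does not depend on $\mathcal{J}_t$. Hence $\Pr(k\in\mathcal{J}_t\mid\mathcal{P}_t)=h_t(k)$. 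Moreover, $A_t$ is drawn with private randomness, independent of $\mathcal{J}_t$ given $\mathcal{P}_t$. Both facts are already stated as the non-anticipating assumptions in the information-structure subsection, so I will simply invoke them.
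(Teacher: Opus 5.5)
Your proposal is correct and follows essentially the same argument as the paper. You factorize $\Pr(A_t\in\mathcal{J}_t\mid\mathcal{P}_t)$ through the conditional independence $A_t\perp\!\!\!\perp\mathcal{J}_t\mid\mathcal{P}_t$ to obtain $\langle\mathbf{p}_t,\mathbf{h}_t\rangle$, apply the projection constraint on feasible rounds, and use $\langle\mathbf{p},\mathbf{h}_t\rangle\ge h_{t,\min}$ with the minimum-risk fallback attaining it in the infeasible case; your remark on passing from conditioning on $(\mathcal{F}_{t-1},\mathbf{X}_t,\mathbf{q}_t)$ to conditioning on $\mathcal{P}_t$ spells out a step the paper simply assumes in its information-structure subsection.
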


\begin{IEEEproof}
	Conditional independence gives
	\[
	\Pr(A_t\in\mathcal{J}_t\mid\mathcal{P}_t)
	=
	\sum_k
	p_t(k)
	\Pr(k\in\mathcal{J}_t\mid\mathcal{P}_t)
	=
	\langle\mathbf{p}_t,\mathbf{h}_t\rangle.
	\]
	The claim then follows from feasibility and the definition of
	$h_{t,\min}$.
\end{IEEEproof}

\begin{corollary}[Risk-model misspecification]
	\label{cor:pa_risk_misspecification}
	If Safe uses $\widetilde{\mathbf{h}}_t$ satisfying
	$\|\widetilde{\mathbf{h}}_t-\mathbf{h}_t\|_\infty
	\leq\varepsilon_t$, then
	\[
	\langle
	\mathbf{p}_t,
	\widetilde{\mathbf{h}}_t
	\rangle
	\leq\tau_t
	\quad\Longrightarrow\quad
	\Pr(A_t\in\mathcal{J}_t\mid\mathcal{P}_t)
	\leq\tau_t+\varepsilon_t.
	\]
\end{corollary}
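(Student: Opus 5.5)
The plan is to reduce the claim to the exact risk identity already established in Theorem~\ref{thm:pa_exact_risk}, and then control the gap between the surrogate and the true risk with a H\"older-type bound.

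\textbf{Step 1: express the hit probability exactly.}
First I would check that the identity
\[
\Pr(A_t\in\mathcal{J}_t\mid\mathcal{P}_t)=\langle\mathbf{p}_t,\mathbf{h}_t\rangle
\]
does not depend on which vector Safe used to form $\mathbf{p}_t$. The executed policy is built from $\widetilde{\mathbf{h}}_t$ and the public information, so it is $\mathcal{P}_t$-measurable. The attack set still has true marginals $\Pr(k\in\mathcal{J}_t\mid\mathcal{P}_t)=h_t(k)$. The conditional independence $A_t\perp\!\!\!\perp\mathcal{J}_t\mid\mathcal{P}_t$ also holds, because the jammer commits before $\mathbf{p}_t$ is formed, whatever risk model the learner uses. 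Summing $p_t(k)h_t(k)$ over $k$, exactly as in the proof of Theorem~\ref{thm:pa_exact_risk}, then gives the identity.

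\textbf{Step 2: bound the true risk by the surrogate risk.}
Write
\[
\langle\mathbf{p}_t,\mathbf{h}_t\rangle
=\langle\mathbf{p}_t,\widetilde{\mathbf{h}}_t\rangle
+\langle\mathbf{p}_t,\mathbf{h}_t-\widetilde{\mathbf{h}}_t\rangle .
\]
The first term is at most $\tau_t$ by hypothesis. For the second term, apply H\"older's inequality with the $\ell_1$/$\ell_\infty$ pairing. Since $\mathbf{p}_t\in\Delta_K$ has $\|\mathbf{p}_t\|_1=1$, this term is at most $\|\widetilde{\mathbf{h}}_t-\mathbf{h}_t\|_\infty\leq\varepsilon_t$. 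Combining the two bounds gives $\Pr(A_t\in\mathcal{J}_t\mid\mathcal{P}_t)\leq\tau_t+\varepsilon_t$.

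\textbf{Main subtlety.}
The only real issue is measurability, not the inequality itself. If $\widetilde{\mathbf{h}}_t$ were produced with access to $\mathcal{J}_t$, the conditioning in Step 1 would fail. I would therefore state explicitly that $\widetilde{\mathbf{h}}_t$ is part of the public information in $\mathcal{P}_t$. Under that assumption the argument above goes through unchanged.
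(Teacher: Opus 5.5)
Your proof is correct and follows the paper's argument: the paper likewise combines the exact identity $\Pr(A_t\in\mathcal{J}_t\mid\mathcal{P}_t)=\langle\mathbf{p}_t,\mathbf{h}_t\rangle$ from Theorem~\ref{thm:pa_exact_risk} with the $\ell_1$/$\ell_\infty$ H\"older bound $|\langle\mathbf{p}_t,\mathbf{h}_t-\widetilde{\mathbf{h}}_t\rangle|\leq\|\mathbf{p}_t\|_1\|\mathbf{h}_t-\widetilde{\mathbf{h}}_t\|_\infty\leq\varepsilon_t$. Your caveat that $\widetilde{\mathbf{h}}_t$ must be public and formed without access to $\mathcal{J}_t$ is left implicit in the paper, but it is exactly the condition that keeps Step~1 valid.
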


\subsection{Overall Oracle Guarantee}

Define the implementation remainders
\[
\begin{aligned}
	E_T^{\mathrm{anchor}}
	&=
	G_c\sum_{t=1}^{T}a_t,
	&
	E_T^{\mathrm{prior}}
	&=
	\delta_{\mathrm L},\\
	E_T^{\mathrm{explore}}
	&=
	G_c\sum_{t=1}^{T}\gamma_t,
	&
	E_T^{\mathrm{safe}}
	&=
	G_c
	\sum_{t\in\mathcal{T}_{\mathrm{safe}}}
	\sqrt{
		\frac{1}{2}
		D_{\mathrm{KL}}
		(
		\mathbf{p}_t
		\Vert
		\overline{\boldsymbol{\pi}}_t
		)
	}.
\end{aligned}
\]
Let $E_T^{\mathrm{fallback}}$ collect the support-loss term
$E_T^{\mathrm{fb}}$ and the unavoidable execution cost on infeasible
fallback rounds.

\begin{theorem}[D-PACT-AFH oracle decomposition]
	\label{thm:pa_oracle}
	Under the conditions of Theorem~\ref{thm:pa_master},
	\begin{equation}
		\begin{aligned}
			&
			\mathbb{E}
			\left[
			\sum_{t=1}^{T}
			\left\langle
			\mathbf{p}_t,
			\boldsymbol{\kappa}_t
			\right\rangle
			\right]
			-
			\min_{b\in\mathcal{B}}C_T^b\\
			&\quad\leq
			\frac{D_\Psi}{\eta_T}
			+
			C_\Psi
			\sum_{t=1}^{T}
			\eta_t\mathbb{E}[V_t]
			+
			E_T^{\mathrm{anchor}}
			+
			E_T^{\mathrm{prior}}\\
			&\qquad\quad+
			E_T^{\mathrm{explore}}
			+
			E_T^{\mathrm{safe}}
			+
			E_T^{\mathrm{fallback}}.
		\end{aligned}
		\label{eq:pa_oracle_shadow}
	\end{equation}
	The minimum is over the Global and Local shadow bases updated on the
	common execution trajectory.
	
	For independently executed references with costs $C_T^{b,\circ}$, the
	same bound holds with the additional transfer term
	\[
	E_T^{\mathrm{transfer}}
	=
	\max_{b\in\mathcal{B}}
	[C_T^b-C_T^{b,\circ}]_+,
	\]
	which may be decomposed through $E_T^{\mathrm G}$ and
	$E_T^{\mathrm L}$.
\end{theorem}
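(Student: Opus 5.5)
The proof is a telescoping decomposition of the executed cost along the chain $\mathbf{p}_t\to\overline{\boldsymbol{\pi}}_t\to\boldsymbol{\pi}_t^{\mathrm{mix}}\to\widetilde{\boldsymbol{\alpha}}_t$. We bound each link by one of the earlier results and then compare the unanchored master with the fixed vertex $\boldsymbol{u}=\mathbf{e}_{b^\star}$ using Theorem~\ref{thm:pa_master}. Concretely, for every slot we write
\[
\langle\mathbf{p}_t,\boldsymbol{\kappa}_t\rangle
=\langle\mathbf{p}_t-\overline{\boldsymbol{\pi}}_t,\boldsymbol{\kappa}_t\rangle
+\langle\overline{\boldsymbol{\pi}}_t-\boldsymbol{\pi}_t^{\mathrm{mix}},\boldsymbol{\kappa}_t\rangle
+\langle\boldsymbol{\alpha}_t-\widetilde{\boldsymbol{\alpha}}_t,\mathbf{c}_t\rangle
+\langle\widetilde{\boldsymbol{\alpha}}_t,\mathbf{c}_t\rangle .
\]
This uses $\langle\boldsymbol{\pi}_t^{\mathrm{mix}},\boldsymbol{\kappa}_t\rangle=\sum_b\alpha_t(b)c_t^b=\langle\boldsymbol{\alpha}_t,\mathbf{c}_t\rangle$, which follows from linearity of \eqref{eq:model_class_mixture}. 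We then sum over $t$, take expectations, and subtract $C_T^{b^\star}=\mathbb{E}[\sum_t\langle\mathbf{e}_{b^\star},\mathbf{c}_t\rangle]$.

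The links are handled as follows.
\begin{itemize}
\item \textbf{Safe link.} The first term vanishes in the Base and Hit modes. On feasible Safe rounds, \eqref{eq:pa_safe_distortion} bounds it by $G_c\sqrt{\tfrac12 D_{\mathrm{KL}}(\mathbf{p}_t\Vert\overline{\boldsymbol{\pi}}_t)}$, which sums to $E_T^{\mathrm{safe}}$. On infeasible rounds the fallback policy differs from $\overline{\boldsymbol{\pi}}_t$ by at most $G_c$ in scalarized cost, since both are distributions and the span of $\boldsymbol{\kappa}_t$ is at most $G_c$. That cost is charged to $E_T^{\mathrm{fallback}}$ by definition.
\item \textbf{Exploration link.} The second term is at most $G_c\gamma_t$, by the exploration bound already established; it gives $E_T^{\mathrm{explore}}$.
\item \textbf{Anchor link.} The third term equals $a_t\langle\mathbf{e}_{\mathrm G}-\widetilde{\boldsymbol{\alpha}}_t,\mathbf{c}_t\rangle$. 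Because $c_t^{\mathrm G}-c_t^{\mathrm L}=\langle\boldsymbol{\rho}_t^{\mathrm G}-\boldsymbol{\rho}_t^{\mathrm L},\boldsymbol{\kappa}_t\rangle\le G_c$, it is at most $G_ca_t$, giving $E_T^{\mathrm{anchor}}$.
\item \textbf{Master link.} Theorem~\ref{thm:pa_master} with $\boldsymbol{u}=\mathbf{e}_{b^\star}$ bounds $\mathbb{E}\sum_t\langle\widetilde{\boldsymbol{\alpha}}_t-\mathbf{e}_{b^\star},\mathbf{c}_t\rangle$ by $D_\Psi/\eta_T+C_\Psi\sum_t\eta_t\mathbb{E}[V_t]+E_T^{\mathrm{fb}}$. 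The term $E_T^{\mathrm{fb}}$ is absorbed into $E_T^{\mathrm{fallback}}$.
\end{itemize}

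\textbf{Main obstacle: the Local prior and the score offset.} Theorem~\ref{thm:pa_master} is stated for FTRL on the cumulative estimated costs. The actual scores \eqref{eq:effective_master_score} differ from these in two ways: they carry the initial offset $\widehat{L}_0^{\mathrm L}=\delta_{\mathrm L}$, and they replace the current-slot risk by the hint $m_t^b$ (the optimistic form). We would therefore redo the FTRL penalty step with a nonzero initial loss vector $\mathbf{C}_0=(0,\delta_{\mathrm L})$. The standard be-the-leader argument then yields an extra term $\langle\mathbf{e}_{b^\star}-\widetilde{\boldsymbol{\alpha}}_1,\mathbf{C}_0\rangle$, which is at most $\delta_{\mathrm L}$ when $b^\star=\mathrm L$ and at most $0$ when $b^\star=\mathrm G$. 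This is exactly $E_T^{\mathrm{prior}}$. The hint enters only through the residual $z_t^b$, which is already built into $V_t$.

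\textbf{Conditioning.} The only delicate point is that $\boldsymbol{\alpha}_t$, $\overline{\boldsymbol{\pi}}_t$ and $\mathbf{p}_t$ are all $\mathcal{P}_t$-measurable, while $\boldsymbol{\kappa}_t$ is $\mathcal{G}_t$-measurable. Each link's bound holds pathwise, so expectations pass through without any further unbiasedness argument beyond Lemma~\ref{lem:pa_unbiasedness}, which is already used inside the master theorem.

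\textbf{Independently executed references.} For these we write $C_T^{b^\star}\le \min_b C_T^{b,\circ}+\max_b[C_T^b-C_T^{b,\circ}]_+$. To justify it, take $b^\circ$ attaining $\min_b C_T^{b,\circ}$. Then $\min_b C_T^b\le C_T^{b^\circ}\le C_T^{b^\circ,\circ}+[C_T^{b^\circ}-C_T^{b^\circ,\circ}]_+$, which adds $E_T^{\mathrm{transfer}}$ to the bound. Its decomposition through $E_T^{\mathrm G}$ and $E_T^{\mathrm L}$ is left as stated, with Proposition~\ref{prop:pa_clipping} controlling the clipping parts.
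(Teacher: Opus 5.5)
Your proposal is correct and follows the paper's proof. Both use the split of $\langle\mathbf{p}_t,\boldsymbol{\kappa}_t\rangle$ into master, exploration ($G_c\gamma_t$), and Safe/fallback links; both apply Theorem~\ref{thm:pa_master} with $\boldsymbol{u}=\mathbf{e}_{b^\star}$ and Pinsker on feasible Safe rounds; and both use the step $\min_b C_T^b\le\min_b C_T^{b,\circ}+E_T^{\mathrm{transfer}}$ for independently executed references. The only difference is that you derive two steps the paper merely asserts: the anchor link $a_t\widetilde{\alpha}_t(\mathrm{L})(c_t^{\mathrm G}-c_t^{\mathrm L})\le G_c a_t$, and the Local-prior offset via the FTRL penalty with initial loss $(0,\delta_{\mathrm L})$, which fills a gap since the appendix proof of Theorem~\ref{thm:pa_master} drops $\delta_{\mathrm L}$ from the update.
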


\begin{IEEEproof}[Proof sketch]
	Apply Theorem~\ref{thm:pa_master} to the unanchored master and add the
	anchor and prior costs. The deviations between the master mixture and
	the executed policy are bounded by the exploration and Safe-distortion
	terms; fallback rounds contribute
	$E_T^{\mathrm{fallback}}$. The standalone comparison follows by adding
	the discrepancy between each shadow base and its independently executed
	reference. A complete proof is given in Appendix~\ref{app:safe_oracle}.
\end{IEEEproof}

\begin{corollary}[Sublinear special case]
	\label{cor:pa_sublinear}
	Suppose
	$\eta_t=\Theta(t^{-1/2})$,
	$\sup_t\mathbb{E}[V_t]<\infty$,
	$a_{\min}=0$,
	$a_t=O(t^{-1/2})$, and
	$\gamma_t=O(t^{-1/2})$. If
	\[
	E_T^{\mathrm{safe}}
	+
	E_T^{\mathrm{fallback}}
	+
	E_T^{\mathrm{transfer}}
	=
	o(T),
	\]
	then the average excess cost relative to the better independent base
	converges to zero. If these three terms are
	$\widetilde{O}(\sqrt{T})$, the total excess cost is
	$\widetilde{O}(\sqrt{T})$.
\end{corollary}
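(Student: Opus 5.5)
The corollary follows by plugging the stated schedules into Theorem~\ref{thm:pa_oracle} in its transfer form and checking the order of each remainder term. The comparator $\min_b C_T^{b,\circ}$ is the better independently executed base. The theorem therefore gives
\[
\mathbb{E}\Bigl[\sum_{t=1}^T\langle\mathbf{p}_t,\boldsymbol{\kappa}_t\rangle\Bigr]-\min_{b}C_T^{b,\circ}
\le \frac{D_\Psi}{\eta_T}+C_\Psi\sum_{t}\eta_t\mathbb{E}[V_t]+E_T^{\mathrm{anchor}}+E_T^{\mathrm{prior}}+E_T^{\mathrm{explore}}+E_T^{\mathrm{safe}}+E_T^{\mathrm{fallback}}+E_T^{\mathrm{transfer}} .
\]
This uses $\min_b C_T^b\le \min_b C_T^{b,\circ}+\max_b[C_T^b-C_T^{b,\circ}]_+$, which is exactly how the transfer term enters. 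I will take the local-stability condition of Theorem~\ref{thm:pa_master} as part of the hypotheses, since the corollary inherits it.

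\emph{Deterministic terms.} With $\eta_t=\Theta(t^{-1/2})$ we have $D_\Psi/\eta_T=O(\sqrt{T})$. Writing $\bar V=\sup_t\mathbb{E}[V_t]<\infty$, the variance term satisfies
\[
C_\Psi\sum_t\eta_t\mathbb{E}[V_t]\le C_\Psi\bar V\sum_t O(t^{-1/2})=O(\sqrt{T}),
\]
by the comparison $\sum_{t\le T}t^{-1/2}\le 2\sqrt{T}$. Because $a_{\min}=0$ and $a_t=O(t^{-1/2})$, we get $E_T^{\mathrm{anchor}}=G_c\sum_t a_t=O(\sqrt{T})$; this matches the displayed bound $G_c(a_{\min}T+2a_0\sqrt{T})$ once $a_{\min}=0$. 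In the same way, $E_T^{\mathrm{explore}}=G_c\sum_t\gamma_t=O(\sqrt{T})$. Finally, $E_T^{\mathrm{prior}}=\delta_{\mathrm L}=O(1)$. The constant $G_c$ does not depend on $T$, since $r_0<1$.

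\emph{Conclusion.} Summing, the total excess cost is at most
\[
O(\sqrt{T})+E_T^{\mathrm{safe}}+E_T^{\mathrm{fallback}}+E_T^{\mathrm{transfer}}.
\]
If these three terms are $o(T)$, dividing by $T$ sends the average excess cost to zero, since $\sqrt{T}/T\to0$. If they are $\widetilde{O}(\sqrt{T})$, the whole bound is $\widetilde{O}(\sqrt{T})$.

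The only delicate point is the direction of the comparator change. The shadow-base bound holds with respect to $\min_b C_T^b$, and the $[\cdot]_+$ in $E_T^{\mathrm{transfer}}$ is what lets us replace it by the minimum over independent references. For this I would argue that, for the minimizing reference $b^\circ$,
\[
C_T^{b^\circ}\le C_T^{b^\circ,\circ}+E_T^{\mathrm{transfer}},
\]
and then use $\min_b C_T^b\le C_T^{b^\circ}$. Everything else is routine summation of $t^{-1/2}$.
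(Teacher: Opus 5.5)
Your proposal is correct and follows the paper's own argument. You invoke the transfer form of Theorem~\ref{thm:pa_oracle}, bound $D_\Psi/\eta_T$, the variance sum, and the anchor and exploration sums by $O(\sqrt{T})$, treat the prior as an $O(1)$ constant, and then either divide by $T$ or absorb the Safe, fallback, and transfer terms, exactly as the appendix does. Your explicit remarks that the local-stability condition is inherited from Theorem~\ref{thm:pa_master} and that $G_c$ does not depend on $T$ are sound clarifications that the paper leaves implicit.
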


A positive anchor floor $a_{\min}$ instead contributes the explicit
linear term $G_ca_{\min}T$ and must be retained when interpreting the
finite-horizon implementation.


\section{Numerical Results}
\label{sec:experiments}

We evaluate D-PACT-AFH in terms of communication utility,
predictive-jamming exposure, model-class adaptation, and computational
cost. Unless stated otherwise, each result is averaged over 20
independent seeds. Methods within the same comparison use identical
seed lists and therefore experience the same channel, occupancy, and
attack realizations. Confidence intervals are computed as
$1.96s/\sqrt{n}$, where $s$ is the sample standard deviation across
seeds.

\subsection{Setup and Evaluation Protocol}

The default system contains $K=12$ candidate channels and operates for
$T=10^4$ slots, with one attacked channel per slot. The normalized
reward follows \eqref{eq:normalized_reward}, and physical goodput follows
\eqref{eq:goodput_reward_relation}. Table~\ref{tab:simulation_setup}
lists the parameters most relevant to reproduction.

\begin{table}[!t]
	\centering
	\caption{Default simulation configuration.}
	\label{tab:simulation_setup}
	\small
	\setlength{\tabcolsep}{3pt}
	\begin{tabular}{@{}>{\raggedright\arraybackslash}p{0.62\columnwidth}
		>{\raggedleft\arraybackslash}p{0.32\columnwidth}@{}}
		\toprule
		Parameter & Value \\
		\midrule
		Channels, horizon, attack budget $(K,T,M)$
		& $(12,10^4,1)$ \\
		Occupancy probability
		& $0.15$ \\
		Transmit/noise/jamming power
		& $1/0.05/2$ \\
		Bandwidth and rate cap
		& $1$ MHz, $4$ bit/s/Hz \\
		Outage threshold
		& $3$ dB \\
		Context dimension
		& $4$ \\
		Local bins and ridge coefficient
		& $[4,4,2]$, $0.1$ \\
		Risk and exploration parameters
		& $\beta=16$, $\nu=0$, $\mu=0.02$ \\
		Off-policy ratio cap
		& $C_{\mathrm{off}}=20$ \\
		Safe95 budget
		& $\tau=0.11$ \\
		\bottomrule
	\end{tabular}
\end{table}

The baselines are UCB, Thompson sampling, EXP3,
LC-Tsallis-INF-Online, risk-aware EXP4, and AUFH-EXP3++.
The proposed modes are D-PACT-Base, D-PACT-Hit, and
D-PACT-Safe95. Since $\nu=0$, the improvement of D-PACT-Hit is
attributable to risk-aware model selection rather than a separate
risk-directed exploration tilt.

Safe95 is selected using an independent 20-seed calibration scan.
Among Pareto-efficient budgets retaining at least $95\%$ of the
unconstrained D-PACT-Hit goodput, we select the point with the smallest
expected hit probability. This yields $\tau=0.11$ for $K=12$ and
$M=1$. The calibration and evaluation seed sets are disjoint.

Experiments were conducted in MATLAB R2020b on a 12th Gen Intel
Core i7-12650H processor with 16 GB DDR4 memory. Independent seeds were
executed using 16 MATLAB workers. Runtime excludes environment
generation, aggregation, plotting, and file output and is used only to
compare relative computational overhead.

\subsection{End-to-End Goodput--Risk Performance}

We first consider the contextual white-box jammer, which produces the
strongest policy-dependent exposure among the evaluated attacks.
Fig.~\ref{fig:endpoint_tradeoff} compares the final goodput and
empirical hit probability.

\begin{figure}[!t]
	\centering
	\includegraphics[width=\columnwidth]
	{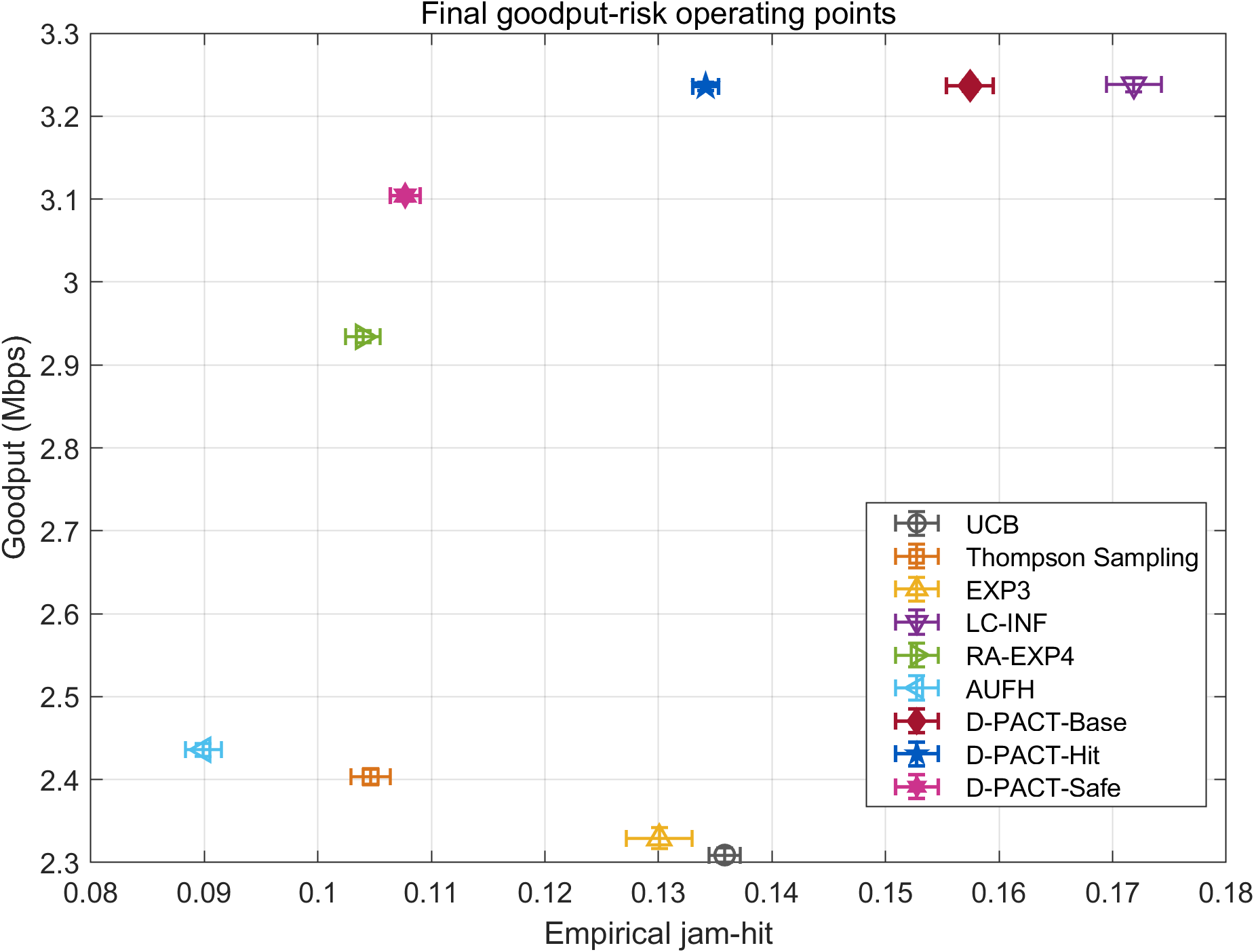}
	\caption{End-to-end goodput--risk operating points under the
		contextual white-box jammer. Error bars denote 95\% confidence
		intervals.}
	\label{fig:endpoint_tradeoff}
\end{figure}

D-PACT-Base obtains $3.236$ Mbps, essentially matching the
$3.238$ Mbps of LC-Tsallis-INF, while reducing the empirical hit rate
from $0.172$ to $0.157$. D-PACT-Hit retains $3.236$ Mbps and further
reduces the hit rate to $0.134$, corresponding to reductions of
$14.8\%$ relative to D-PACT-Base and $21.9\%$ relative to
LC-Tsallis-INF.

D-PACT-Safe95 lowers the hit rate to $0.108$ while retaining
$95.9\%$ of the D-PACT-Hit goodput. Risk-aware EXP4 and AUFH-EXP3++
attain similarly low exposure, but their goodputs are lower by
approximately $0.17$ and $0.67$ Mbps, respectively. D-PACT-Safe95
therefore occupies a more favorable goodput--risk region.

\begin{figure*}[!t]
	\centering
	\includegraphics[width=0.88\textwidth]
	{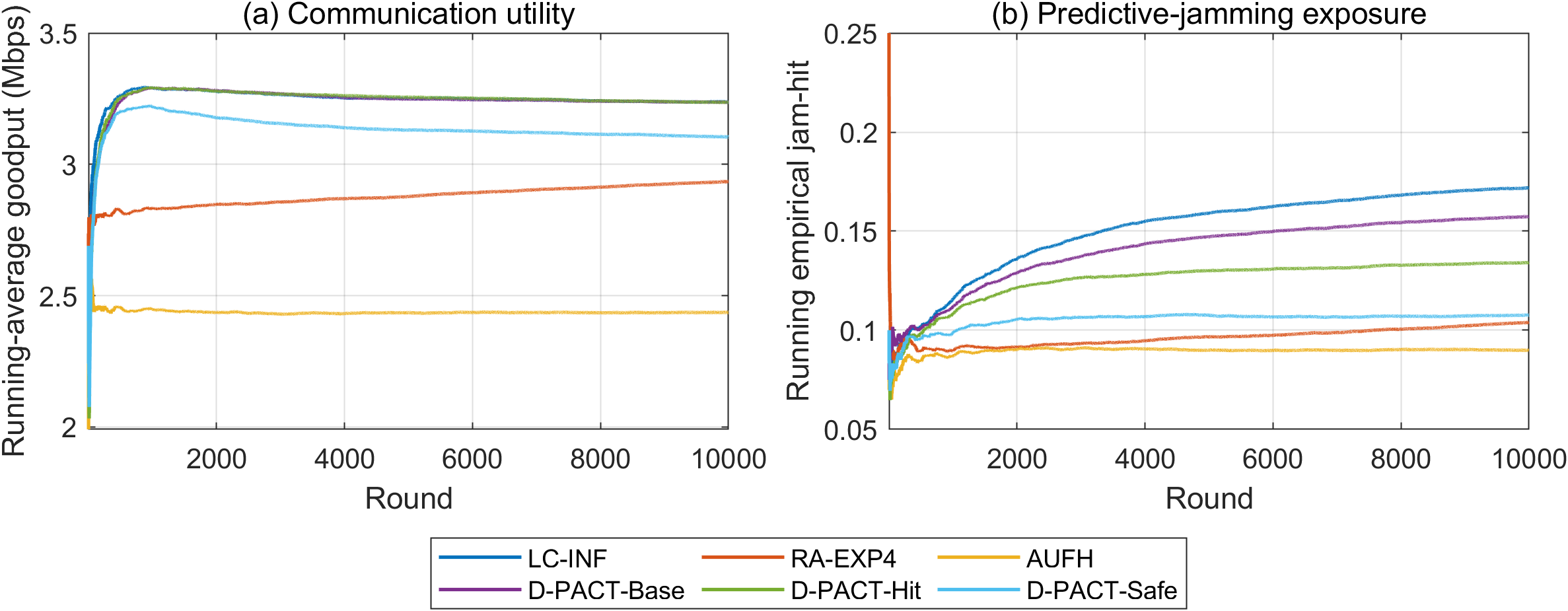}
	\caption{Online communication and exposure performance under the
		contextual white-box jammer. (a) Running-average delivered goodput.
		(b) Running empirical jam-hit probability. Curves are averaged over
		20 independent seeds.}
	\label{fig:running_performance}
\end{figure*}

Fig.~\ref{fig:running_performance} shows the online evolution under the
contextual white-box jammer. D-PACT-Hit approaches the highest
running-average goodput while maintaining substantially lower exposure
than LC-Tsallis-INF and D-PACT-Base. D-PACT-Safe95 further suppresses
the empirical hit probability with a stable and controlled goodput
penalty. The trajectories confirm that the endpoint improvements are
not caused by isolated late-round fluctuations.


\subsection{Mechanism Validation and Model Adaptation}

Table~\ref{tab:ablation} separates the effects of the Global and Local
bases, risk-aware model selection, and Safe projection.

\begin{table}[!t]
	\centering
	\caption{Mechanism ablation under the white-box jammer. Values are
		mean $\pm$ 95\% confidence-interval half-width.}
	\label{tab:ablation}
	\small
	\setlength{\tabcolsep}{3pt}
	\begin{tabular}{lccc}
		\toprule
		Method
		& Goodput
		& Jam hit
		& Local mass \\
		& (Mbps) & & \\
		\midrule
		Global-only
		& $3.235\!\pm\!0.006$
		& $0.172\!\pm\!0.002$
		& $0$ \\
		Local-only
		& $3.239\!\pm\!0.005$
		& $0.133\!\pm\!0.001$
		& $1$ \\
		D-PACT-Base
		& $3.232\!\pm\!0.007$
		& $0.158\!\pm\!0.002$
		& $0.353$ \\
		D-PACT-Hit
		& $3.236\!\pm\!0.005$
		& $0.134\!\pm\!0.001$
		& $0.948$ \\
		D-PACT-Safe95
		& $3.104\!\pm\!0.003$
		& $0.108\!\pm\!0.001$
		& $0.944$ \\
		\bottomrule
	\end{tabular}
\end{table}

Global-only and Local-only achieve similar goodput but differ markedly
in exposure. Without the risk term, D-PACT-Base assigns $0.353$ mean
mass to Local and exhibits an intermediate hit rate. D-PACT-Hit shifts
the Local mass to $0.948$, matching the lower-exposure Local policy
without sacrificing goodput. Safe95 leaves the master allocation nearly
unchanged; its additional reduction therefore arises from policy
projection rather than a second model-selection effect.

\begin{figure*}[!t]
	\centering
	\includegraphics[width=0.88\textwidth]
	{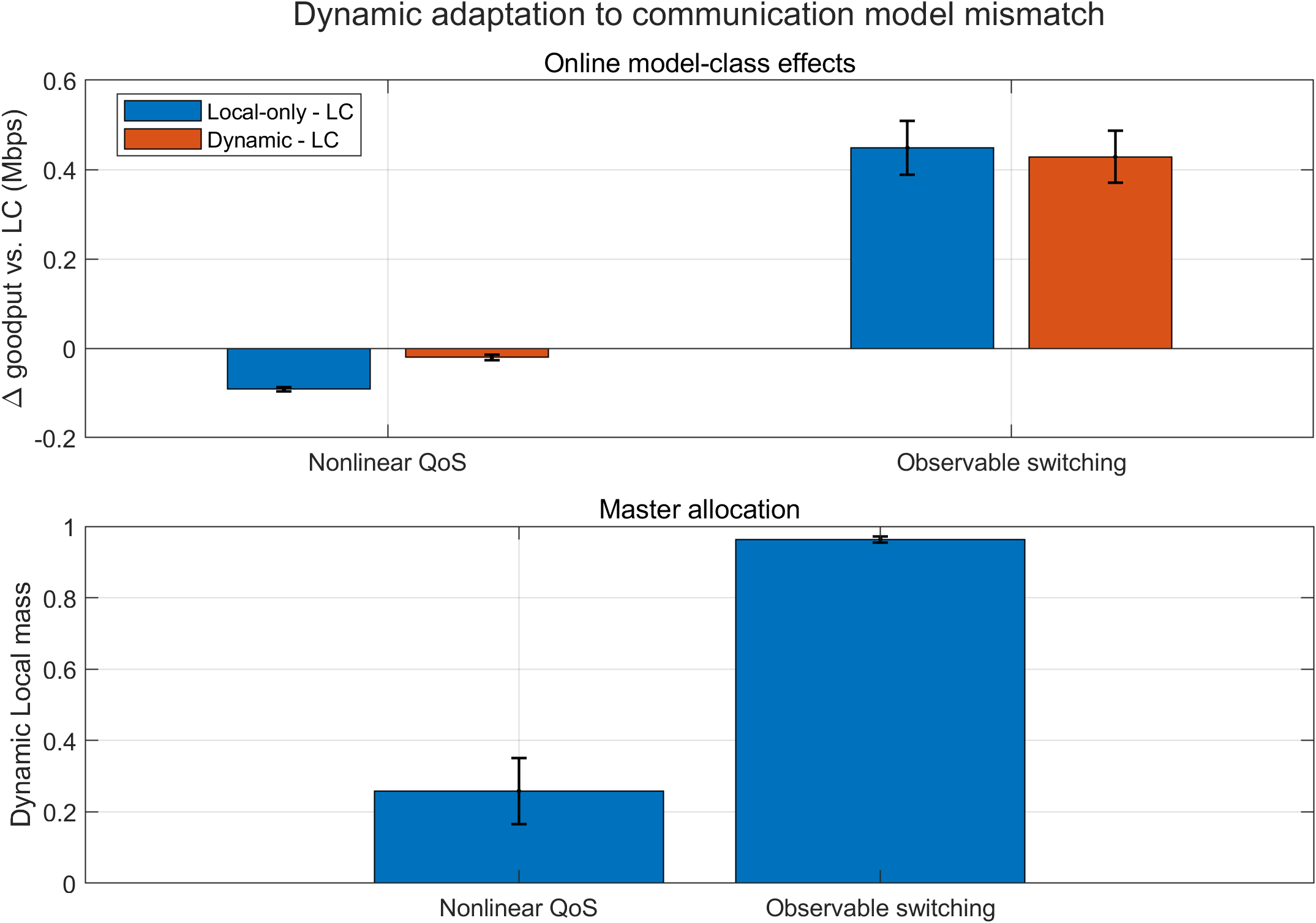}
	\caption{Model adaptation under a nonlinear negative control and an
		observable-switching regime. The upper panel reports goodput
		relative to the Global base; the lower panel reports the mean Local
		master mass.}
	\label{fig:model_selection}
\end{figure*}


Fig.~\ref{fig:model_selection} tests whether the master adapts to model
mismatch rather than systematically favoring the more complex base.
Under observable switching, Local-only improves over Global by
$0.449$ Mbps, and D-PACT recovers $0.428$ Mbps, or $95.5\%$, of this
gain while assigning $0.964$ mass to Local. In the nonlinear negative
control, Local-only is $0.092$ Mbps worse than Global, whereas D-PACT
loses only $0.020$ Mbps and assigns $0.258$ mass to Local. It therefore
avoids $77.7\%$ of the degradation caused by committing to Local.

\subsection{Robustness and Safe Risk Control}

We first examine how the explicit risk budget controls the operating
point of D-PACT-Safe.

\begin{figure}[!t]
	\centering
	\includegraphics[width=0.94\columnwidth]
	{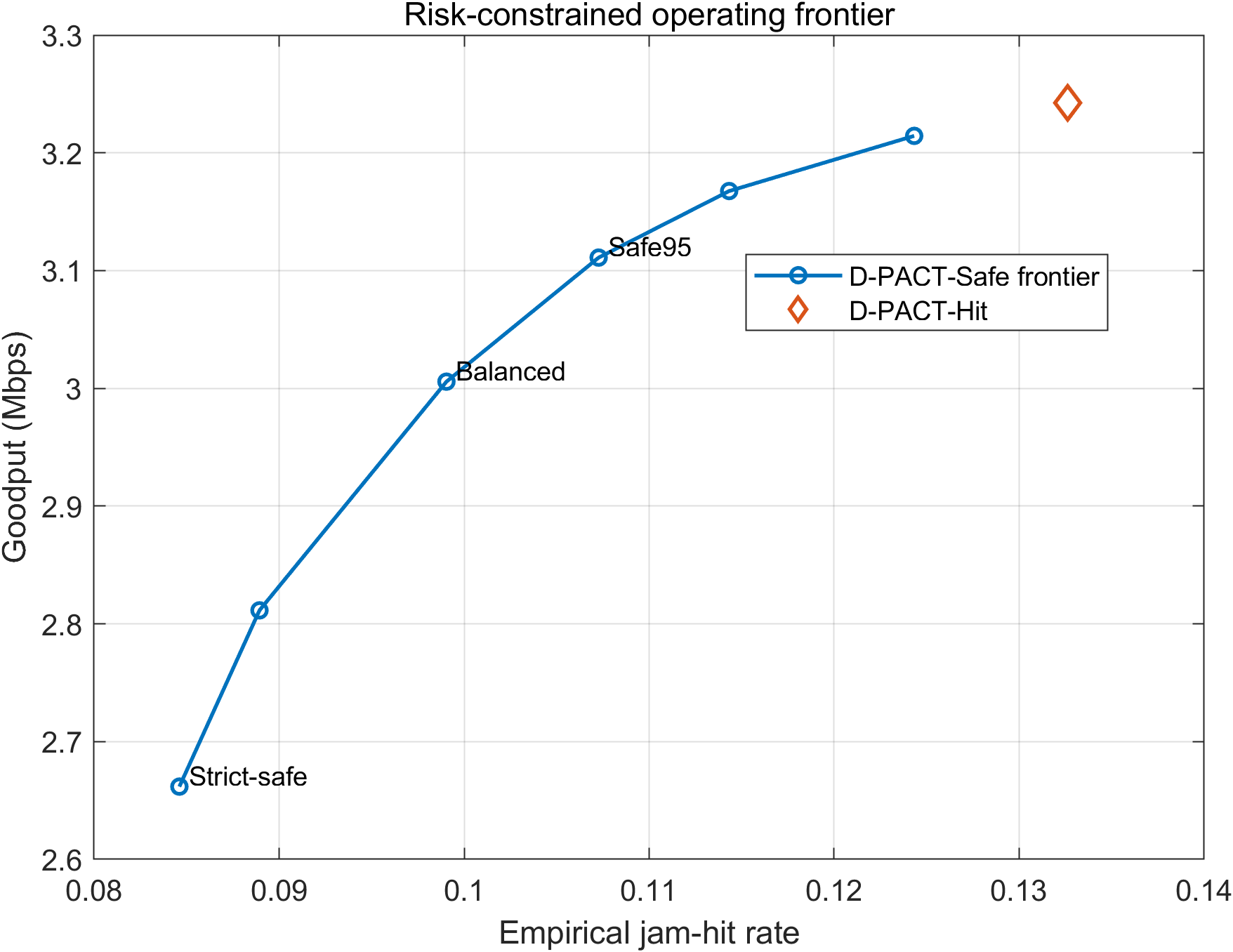}
	\caption{Goodput--risk frontier of D-PACT-Safe under the contextual
		white-box jammer. Safe95 is independently calibrated at
		$\tau=0.11$.}
	\label{fig:safe_frontier}
\end{figure}

Fig.~\ref{fig:safe_frontier} shows a monotone empirical tradeoff.
Tightening $\tau$ from $0.14$ to $0.085$ reduces the empirical hit rate
from $0.124$ to $0.0846$, while goodput decreases from $3.214$ to
$2.662$ Mbps. Safe95 yields $3.111$ Mbps at a hit rate of $0.1073$.
The maximum numerical budget violation is below $10^{-10}$, confirming
that the projection enforces the prescribed constraint to numerical
precision.

We next evaluate random, sweeping, FOLPETTI-inspired, and contextual
white-box jammers. Under random jamming, D-PACT-Base, D-PACT-Hit, and
D-PACT-Safe95 coincide at $3.580$ Mbps and a hit rate of $0.0834$,
close to the uniform baseline $M/K=1/12$. Thus, the risk-aware
mechanisms create no artificial advantage when the public risk is
action-independent.

Across sweeping and FOLPETTI-inspired attacks, all D-PACT variants
retain high communication utility. Under contextual white-box jamming,
D-PACT-Hit and D-PACT-Safe95 reduce the hit rate to $0.135$ and
$0.108$, respectively. Additional tests over nominal, nonlinear,
observable-switching, contaminated, exogenous-sweep, mixed, and
hidden-Markov regimes exhibit the same qualitative behavior. Complete
method-by-attacker comparisons and regime results are reported in
Appendix~\ref{app:cross_attacker} and
Appendix~\ref{app:regime_robustness}.

\subsection{Scaling and Computational Cost}

Scaling experiments use 10 seeds per configuration. As $K$ increases
from $8$ to $32$, D-PACT-Hit goodput rises from $2.931$ to
$3.639$ Mbps, while its empirical hit rate falls from $0.199$ to
$0.0467$. D-PACT-Safe follows the correspondingly scaled risk budgets;
at $K=32$, the unconstrained policy is already feasible, and Safe and
Hit become nearly identical.

Across horizons from $10^3$ to $2\times10^4$, D-PACT-Hit maintains
$3.22$--$3.31$ Mbps and Safe95 maintains $3.08$--$3.23$ Mbps. Runtime
grows approximately linearly with the horizon. Hit adds limited
overhead over Base, whereas Safe incurs the cost of a one-dimensional
bisection per slot. Complete scaling curves and timing results are reported
in Appendix~\ref{app:scaling_runtime}.


\section{Conclusion}
\label{sec:conclusion}

This paper developed D-PACT-AFH for adaptive frequency hopping under
predictive jamming and contextual-model uncertainty. A Tsallis-FTRL
master combines globally shared and locally specialized learners,
allowing the transmitter to adapt its model class from bandit feedback.
D-PACT-Hit incorporates channel-wise marginal attack risk into model
selection, while D-PACT-Safe enforces a per-slot hit-risk budget through
a minimum-KL projection. The analysis established conditional estimator
validity under non-anticipating attacks, an oracle decomposition relative
to the better continuously updated base, and exact conditional hit-risk
control under the fixed-budget attack model.

Numerical results showed that the master adapts to observable model
mismatch without systematically favoring the more complex local learner.
The hit-aware mode reduced predictive exposure with negligible
communication loss, and the Safe mode produced a controllable
goodput--risk frontier across channel regimes and jammer types.

The present guarantees rely on a non-anticipating attacker and accurate
channel-wise risk marginals; misspecification is covered only through a
bounded-error extension. Future work will consider learned or delayed
risk predictors, time-varying attack budgets, interacting transmitter
pairs, and evaluation on software-defined radio platforms.

\appendices

\appendices

\section{Proof of the Risk-Aware Tsallis Master Bound}
\label{app:master_proof}

Let
\[
\widehat{\mathbf C}_{t-1}
=
\sum_{s=1}^{t-1}\widehat{\mathbf c}_s,
\qquad
\widehat{c}_t^b
=
\widehat{\ell}_t^b+\beta g_t^b,
\]
and define
\[
\Phi_t(\boldsymbol{\alpha})
=
\frac{1}{\eta_t}\Psi(\boldsymbol{\alpha}),
\qquad
\Psi(\boldsymbol{\alpha})
=
-2\sum_{b\in\mathcal B}\sqrt{\alpha(b)}.
\]
The unanchored master update can then be written as
\[
\widetilde{\boldsymbol{\alpha}}_t
=
\argmin_{\boldsymbol{\alpha}\in\Delta_2}
\left\{
\left\langle
\boldsymbol{\alpha},
\widehat{\mathbf C}_{t-1}+\mathbf m_t
\right\rangle
+
\Phi_t(\boldsymbol{\alpha})
\right\},
\]
where
$\mathbf m_t=(m_t^b)_{b\in\mathcal B}$ is predictable before
the action is sampled. Introduce the corresponding look-ahead point
\[
\boldsymbol{\alpha}_t^+
=
\argmin_{\boldsymbol{\alpha}\in\Delta_2}
\left\{
\left\langle
\boldsymbol{\alpha},
\widehat{\mathbf C}_{t-1}
+
\widehat{\mathbf c}_t
\right\rangle
+
\Phi_t(\boldsymbol{\alpha})
\right\}.
\]
The residual between the observed scalarized cost and its predictable
part is
\[
\mathbf z_t
=
\widehat{\mathbf c}_t-\mathbf m_t.
\]

The optimistic-FTRL stability--penalty decomposition gives, for every
fixed $\boldsymbol{u}\in\Delta_2$,
\[
\begin{aligned}
	\sum_{t=1}^{T}
	\left\langle
	\widetilde{\boldsymbol{\alpha}}_t-\boldsymbol{u},
	\widehat{\mathbf c}_t
	\right\rangle
	\leq\;&
	\mathcal P_T(\boldsymbol{u})\\
	&+
	\sum_{t=1}^{T}
	\left[
	\left\langle
	\widetilde{\boldsymbol{\alpha}}_t
	-
	\boldsymbol{\alpha}_t^+,
	\mathbf z_t
	\right\rangle
	-
	D_{\Phi_t}
	\left(
	\boldsymbol{\alpha}_t^+,
	\widetilde{\boldsymbol{\alpha}}_t
	\right)
	\right],
\end{aligned}
\]
where $D_{\Phi_t}$ is the Bregman divergence generated by $\Phi_t$.
This inequality follows by applying the first-order optimality
conditions to
$\widetilde{\boldsymbol{\alpha}}_t$ and
$\boldsymbol{\alpha}_t^+$ and summing the resulting three-point
inequalities over $t$.

Because $\eta_t$ is nonincreasing, the changing-regularizer penalty
satisfies
\[
\mathcal P_T(\boldsymbol{u})
\leq
\frac{
	\max_{\boldsymbol{v}\in\Delta_2}\Psi(\boldsymbol{v})
	-
	\min_{\boldsymbol{v}\in\Delta_2}\Psi(\boldsymbol{v})
}{\eta_T}
=
\frac{D_\Psi}{\eta_T}.
\]

For an interior point
$\boldsymbol{\alpha}\in\Delta_2$,
\[
\nabla^2\Psi(\boldsymbol{\alpha})
=
\frac{1}{2}
\operatorname{diag}
\left(
\alpha(b)^{-3/2}
\right)_{b\in\mathcal B}.
\]
Hence the dual local norm induced by $\Psi$ is
\[
\|\mathbf z\|_{\ast,\boldsymbol{\alpha}}^2
=
\mathbf z^{\mathsf T}
[\nabla^2\Psi(\boldsymbol{\alpha})]^{-1}
\mathbf z
=
2\sum_{b\in\mathcal B}
[\alpha(b)]^{3/2}z(b)^2.
\]
Evaluated at
$\widetilde{\boldsymbol{\alpha}}_t$, this is precisely the local norm
used in Theorem~\ref{thm:pa_master}, and
\[
\|\mathbf z_t\|_{\ast,t}^2
=
2V_t.
\]

Under the assumed local-stability condition
$\eta_t\|\mathbf z_t\|_{\ast,t}\leq c_0<1$, the segment joining
$\widetilde{\boldsymbol{\alpha}}_t$ and
$\boldsymbol{\alpha}_t^+$ remains in a constant-factor local
neighborhood. Consequently, for some
$c_\Psi=c_\Psi(c_0)>0$,
\[
D_{\Phi_t}
\left(
\boldsymbol{\alpha}_t^+,
\widetilde{\boldsymbol{\alpha}}_t
\right)
\geq
\frac{c_\Psi}{2\eta_t}
\left\|
\boldsymbol{\alpha}_t^+
-
\widetilde{\boldsymbol{\alpha}}_t
\right\|_t^2.
\]
Local Hölder inequality then yields
\[
\begin{aligned}
	&
	\left\langle
	\widetilde{\boldsymbol{\alpha}}_t
	-
	\boldsymbol{\alpha}_t^+,
	\mathbf z_t
	\right\rangle
	-
	D_{\Phi_t}
	\left(
	\boldsymbol{\alpha}_t^+,
	\widetilde{\boldsymbol{\alpha}}_t
	\right)\\
	&\qquad\leq
	\left\|
	\widetilde{\boldsymbol{\alpha}}_t
	-
	\boldsymbol{\alpha}_t^+
	\right\|_t
	\|\mathbf z_t\|_{\ast,t}
	-
	\frac{c_\Psi}{2\eta_t}
	\left\|
	\widetilde{\boldsymbol{\alpha}}_t
	-
	\boldsymbol{\alpha}_t^+
	\right\|_t^2\\
	&\qquad\leq
	\frac{\eta_t}{2c_\Psi}
	\|\mathbf z_t\|_{\ast,t}^2
	=
	\frac{\eta_t}{c_\Psi}V_t.
\end{aligned}
\]
Thus, with $C_\Psi=1/c_\Psi$,
\[
\sum_{t=1}^{T}
\left\langle
\widetilde{\boldsymbol{\alpha}}_t-\boldsymbol{u},
\widehat{\mathbf c}_t
\right\rangle
\leq
\frac{D_\Psi}{\eta_T}
+
C_\Psi
\sum_{t=1}^{T}\eta_tV_t.
\]

It remains to replace the estimated costs by their conditional
expectations. From Lemma~\ref{lem:pa_unbiasedness},
\[
\mathbb E_t[\widehat{c}_t^b]
=
c_t^b-B_{t,\mathrm{fb}}^b.
\]
Since
$\widetilde{\boldsymbol{\alpha}}_t$ and
$\boldsymbol{u}$ are distributions over the two bases,
\[
\left\langle
\widetilde{\boldsymbol{\alpha}}_t-\boldsymbol{u},
\mathbf B_{t,\mathrm{fb}}
\right\rangle
\leq
\max_{b\in\mathcal B}B_{t,\mathrm{fb}}^b.
\]
Taking expectations and summing over $t$ therefore gives
\[
\begin{aligned}
	\mathbb E
	\left[
	\sum_{t=1}^{T}
	\left\langle
	\widetilde{\boldsymbol{\alpha}}_t-\boldsymbol{u},
	\mathbf c_t
	\right\rangle
	\right]
	\leq\;&
	\frac{D_\Psi}{\eta_T}
	+
	C_\Psi
	\sum_{t=1}^{T}
	\eta_t\mathbb E[V_t]\\
	&+
	\sum_{t=1}^{T}
	\max_{b\in\mathcal B}
	\mathbb E[B_{t,\mathrm{fb}}^b],
\end{aligned}
\]
which is the claim of Theorem~\ref{thm:pa_master}.

\section{Safe Projection and Oracle Decomposition}
\label{app:safe_oracle}

\subsection{Safe Projection}

We first prove Proposition~\ref{prop:pa_safe_projection}. For fixed
$t$, suppress the time index and write
$\overline{\boldsymbol{\pi}}$ for the strictly positive
pre-projection policy and $\mathbf h$ for the risk vector. The minimum
risk achievable over the simplex is
\[
\min_{\mathbf p\in\Delta_K}
\langle\mathbf p,\mathbf h\rangle
=
h_{\min}
\triangleq
\min_k h(k).
\]
Hence the feasible set is nonempty if and only if
$\tau\geq h_{\min}$.

Because
$D_{\mathrm{KL}}(\mathbf p\Vert\overline{\boldsymbol{\pi}})$
is strictly convex in $\mathbf p$ and the feasible set is convex and
compact, every feasible problem has a unique solution. If
$\langle\overline{\boldsymbol{\pi}},\mathbf h\rangle\leq\tau$,
the unprojected policy is feasible and attains zero KL divergence;
therefore it is the unique optimum.

Consider next
\[
h_{\min}
<
\tau
<
\langle\overline{\boldsymbol{\pi}},\mathbf h\rangle.
\]
The risk constraint must be active. The Lagrangian is
\[
\begin{aligned}
	\mathcal L(\mathbf p,\lambda,\nu)
	&=
	\sum_{k=1}^{K}
	p(k)\log\frac{p(k)}{\overline{\pi}(k)}
	\\
	&
	+
	\lambda
	\left(
	\sum_{k=1}^{K}p(k)h(k)-\tau
	\right)
	\\
	&
	+
	\nu
	\left(
	\sum_{k=1}^{K}p(k)-1
	\right),
\end{aligned}
\]
with $\lambda\geq0$. Stationarity gives
\[
\log\frac{p(k)}{\overline{\pi}(k)}
+
1+\lambda h(k)+\nu
=
0,
\]
and therefore
\[
p_\lambda(k)
=
\frac{
	\overline{\pi}(k)e^{-\lambda h(k)}
}{
	\sum_{j=1}^{K}
	\overline{\pi}(j)e^{-\lambda h(j)}
}.
\]
Define
\[
\varphi(\lambda)
=
\langle\mathbf p_\lambda,\mathbf h\rangle.
\]
Direct differentiation gives
\[
\varphi'(\lambda)
=
-
\operatorname{Var}_{\mathbf p_\lambda}[h]
\leq0.
\]
Unless all coordinates of $\mathbf h$ are equal,
$\varphi$ is strictly decreasing from
$\varphi(0)=
\langle\overline{\boldsymbol{\pi}},\mathbf h\rangle$
to $h_{\min}$ as $\lambda\rightarrow\infty$. Hence there is a unique
$\lambda>0$ satisfying $\varphi(\lambda)=\tau$, which proves the
exponential-tilting representation. At $\tau=h_{\min}$, the limiting
distribution is the renormalization of
$\overline{\boldsymbol{\pi}}$ over
$\argmin_k h(k)$.

For any vector $\boldsymbol{\kappa}$ with span at most $G_c$,
the total-variation inequality gives
\[
\left|
\left\langle
\mathbf p-\overline{\boldsymbol{\pi}},
\boldsymbol{\kappa}
\right\rangle
\right|
\leq
\frac{G_c}{2}
\|\mathbf p-\overline{\boldsymbol{\pi}}\|_1.
\]
Pinsker's inequality then yields
\[
\left|
\left\langle
\mathbf p-\overline{\boldsymbol{\pi}},
\boldsymbol{\kappa}
\right\rangle
\right|
\leq
G_c
\sqrt{
	\frac{1}{2}
	D_{\mathrm{KL}}
	(
	\mathbf p
	\Vert
	\overline{\boldsymbol{\pi}}
	)
}.
\]
Applying this inequality to the Safe optimizer proves the distortion
bound in Proposition~\ref{prop:pa_safe_projection}.

\subsection{Conditional Hit-Risk Guarantee}

We next prove Theorem~\ref{thm:pa_exact_risk}. Given the public
pre-action information $\mathcal P_t$, the sampled action and hidden
attack realization are conditionally independent. Therefore,
\[
\begin{aligned}
	\Pr(
	A_t\in\mathcal J_t
	\mid\mathcal P_t
	)
	&=
	\sum_{k=1}^{K}
	\Pr(A_t=k\mid\mathcal P_t)
	\Pr(k\in\mathcal J_t\mid\mathcal P_t)\\
	&=
	\sum_{k=1}^{K}
	p_t(k)h_t(k)
	=
	\langle\mathbf p_t,\mathbf h_t\rangle.
\end{aligned}
\]
On feasible Safe rounds, the final quantity is at most $\tau_t$ by
construction. If $\tau_t<h_{t,\min}$, then for every
$\mathbf p\in\Delta_K$,
\[
\langle\mathbf p,\mathbf h_t\rangle
\geq
h_{t,\min},
\]
so a violation of at least $h_{t,\min}-\tau_t$ is unavoidable. A
distribution supported on the minimum-risk channels attains this lower
bound.

If the projection uses an approximate vector
$\widetilde{\mathbf h}_t$ satisfying
$\|\widetilde{\mathbf h}_t-\mathbf h_t\|_\infty
\leq\varepsilon_t$, then
\[
\begin{aligned}
	\langle\mathbf p_t,\mathbf h_t\rangle
	&\leq
	\langle\mathbf p_t,\widetilde{\mathbf h}_t\rangle
	+
	\left|
	\left\langle
	\mathbf p_t,
	\mathbf h_t-\widetilde{\mathbf h}_t
	\right\rangle
	\right|\\
	&\leq
	\tau_t
	+
	\|\mathbf p_t\|_1
	\|\mathbf h_t-\widetilde{\mathbf h}_t\|_\infty\\
	&\leq
	\tau_t+\varepsilon_t,
\end{aligned}
\]
which proves the misspecification corollary.

\subsection{Oracle Decomposition}

We finally prove Theorem~\ref{thm:pa_oracle}. The scalarized cost of
the model-class mixture satisfies
\[
\begin{aligned}
	\left\langle
	\boldsymbol{\pi}_t^{\mathrm{mix}},
	\boldsymbol{\kappa}_t
	\right\rangle
	&=
	\left\langle
	\sum_{b\in\mathcal B}
	\alpha_t(b)\boldsymbol{\rho}_t^b,
	\boldsymbol{\kappa}_t
	\right\rangle\\
	&=
	\sum_{b\in\mathcal B}
	\alpha_t(b)c_t^b
	=
	\langle\boldsymbol{\alpha}_t,\mathbf c_t\rangle.
\end{aligned}
\]

For any comparator
$\boldsymbol{u}\in\Delta_2$, decompose the executed-policy cost as
\[
\begin{aligned}
	&
	\left\langle
	\mathbf p_t,
	\boldsymbol{\kappa}_t
	\right\rangle
	-
	\langle\boldsymbol{u},\mathbf c_t\rangle\\
	&=
	\left\langle
	\boldsymbol{\alpha}_t-\boldsymbol{u},
	\mathbf c_t
	\right\rangle\\
	&\quad+
	\left\langle
	\overline{\boldsymbol{\pi}}_t
	-
	\boldsymbol{\pi}_t^{\mathrm{mix}},
	\boldsymbol{\kappa}_t
	\right\rangle\\
	&\quad+
	\left\langle
	\mathbf p_t
	-
	\overline{\boldsymbol{\pi}}_t,
	\boldsymbol{\kappa}_t
	\right\rangle.
\end{aligned}
\]

The first term is bounded by Theorem~\ref{thm:pa_master}, together with
the anchor and Local-prior perturbations:
\[
\begin{aligned}
	\mathbb E
	\left[
	\sum_{t=1}^{T}
	\left\langle
	\boldsymbol{\alpha}_t-\boldsymbol{u},
	\mathbf c_t
	\right\rangle
	\right]
	\leq\;&
	\frac{D_\Psi}{\eta_T}
	+
	C_\Psi
	\sum_{t=1}^{T}
	\eta_t\mathbb E[V_t]\\
	&+
	E_T^{\mathrm{anchor}}
	+
	E_T^{\mathrm{prior}}
	+
	E_T^{\mathrm{fb}}.
\end{aligned}
\]

Since
\[
\overline{\boldsymbol{\pi}}_t
=
(1-\gamma_t)
\boldsymbol{\pi}_t^{\mathrm{mix}}
+
\gamma_t\mathbf e_t,
\]
and $\boldsymbol{\kappa}_t$ has span at most $G_c$,
\[
\left|
\left\langle
\overline{\boldsymbol{\pi}}_t
-
\boldsymbol{\pi}_t^{\mathrm{mix}},
\boldsymbol{\kappa}_t
\right\rangle
\right|
\leq
G_c\gamma_t.
\]
Summing gives $E_T^{\mathrm{explore}}$.

On feasible Safe rounds, Proposition~\ref{prop:pa_safe_projection}
gives
\[
\left|
\left\langle
\mathbf p_t
-
\overline{\boldsymbol{\pi}}_t,
\boldsymbol{\kappa}_t
\right\rangle
\right|
\leq
G_c
\sqrt{
	\frac{1}{2}
	D_{\mathrm{KL}}
	(
	\mathbf p_t
	\Vert
	\overline{\boldsymbol{\pi}}_t
	)
}.
\]
Its sum is $E_T^{\mathrm{safe}}$. Support loss, infeasible budgets, and
fallback execution are collected in
$E_T^{\mathrm{fallback}}$.

Choosing $\boldsymbol{u}$ as the unit vector corresponding to the
better shadow base therefore yields
\[
\begin{aligned}
	&
	\mathbb E
	\left[
	\sum_{t=1}^{T}
	\left\langle
	\mathbf p_t,
	\boldsymbol{\kappa}_t
	\right\rangle
	\right]
	-
	\min_{b\in\mathcal B}C_T^b\\
	&\quad\leq
	\frac{D_\Psi}{\eta_T}
	+
	C_\Psi
	\sum_{t=1}^{T}
	\eta_t\mathbb E[V_t]
	+
	E_T^{\mathrm{anchor}}
	+
	E_T^{\mathrm{prior}}\\
	&\qquad\quad+
	E_T^{\mathrm{explore}}
	+
	E_T^{\mathrm{safe}}
	+
	E_T^{\mathrm{fallback}},
\end{aligned}
\]
which proves the common-trajectory oracle decomposition.

For independently executed references, define
\[
E_T^{\mathrm{transfer}}
=
\max_{b\in\mathcal B}
[C_T^b-C_T^{b,\circ}]_+.
\]
For every $b$,
$C_T^b\leq C_T^{b,\circ}+E_T^{\mathrm{transfer}}$, and hence
\[
\min_b C_T^b
\leq
\min_b C_T^{b,\circ}
+
E_T^{\mathrm{transfer}}.
\]
Adding this inequality to the common-trajectory bound establishes the
standalone-base extension.

Finally, under the assumptions of
Corollary~\ref{cor:pa_sublinear},
\[
\frac{D_\Psi}{\eta_T}
=
O(\sqrt{T}),
\qquad
\sum_{t=1}^{T}\eta_t\mathbb E[V_t]
=
O(\sqrt{T}),
\]
and both the anchor and exploration sums are $O(\sqrt{T})$ when
$a_{\min}=0$. If the remaining Safe, fallback, and transfer terms are
$o(T)$, division by $T$ gives vanishing average excess cost. If all
three are $\widetilde O(\sqrt T)$, the total excess cost is
$\widetilde O(\sqrt T)$.


\section{Supplementary Experimental Results}
\label{app:additional_experiments}

This appendix reports the saved full-run aggregates underlying the
robustness, scaling, and Safe-calibration statements in
Section~\ref{sec:experiments}. No simulation or parameter search was rerun.
Unless stated otherwise, results use 20 evaluation seeds, matched across
methods within each comparison; 95\% confidence intervals use the convention
specified in Section~\ref{sec:experiments}. The scaling sweep uses 10 seeds
per configuration, and the Safe calibration uses 20 calibration seeds that
are disjoint from the main evaluation seeds.

\subsection{Cross-Attacker Robustness}
\label{app:cross_attacker}

The full method-by-attacker comparison covers random, sweeping,
FOLPETTI-inspired, and contextual white-box jammers. Table~\ref{tab:app_cross_attacker}
reports every saved method in the common order used by the heatmaps and
tradeoff plots. Under random jamming, the three D-PACT modes coincide at
$3.580$ Mbps and empirical hit rate $0.0834$, consistent with the
action-independent baseline $M/K=1/12$. The white-box column separates the
communication/exposure tradeoff: D-PACT-Hit records $3.230$ Mbps at empirical
hit rate $0.1352$, whereas D-PACT-Safe95 records $3.105$ Mbps at $0.1080$.

\begin{table*}[!t]
\centering
\caption{Complete cross-attacker comparison from the saved full evaluation.
Each entry is mean goodput in Mbps / empirical jam-hit rate over 20 evaluation
seeds. Methods in each attacker column share the same seed list and
realizations.}
\label{tab:app_cross_attacker}
\small
\setlength{\tabcolsep}{7pt}
\begin{tabular}{@{}lcccc@{}}
\toprule
Method & Random & Sweep & FOLPETTI-inspired & White-box \\
\midrule
Thompson & 2.481/0.0837 & 2.479/0.0838 & 2.468/0.0884 & 2.399/0.1049 \\
EXP3 & 2.487/0.0833 & 2.476/0.0849 & 2.429/0.1043 & 2.336/0.1306 \\
LC-INF & 3.592/0.0836 & 3.593/0.0845 & 3.596/0.0836 & 3.231/0.1723 \\
RA-EXP4 & 3.067/0.0828 & 3.063/0.0836 & 3.061/0.0843 & 2.934/0.1048 \\
AUFH & 2.482/0.0829 & 2.478/0.0829 & 2.471/0.0844 & 2.441/0.0894 \\
D-PACT-Base & 3.580/0.0834 & 3.574/0.0848 & 3.576/0.0842 & 3.229/0.1607 \\
D-PACT-Hit & 3.580/0.0834 & 3.568/0.0842 & 3.573/0.0842 & 3.230/0.1352 \\
D-PACT-Safe95 & 3.580/0.0834 & 3.697/0.0481 & 3.581/0.0793 & 3.105/0.1080 \\
\bottomrule
\end{tabular}
\end{table*}

\begin{figure*}[!t]
  \centering
  \includegraphics[width=0.96\textwidth]
  {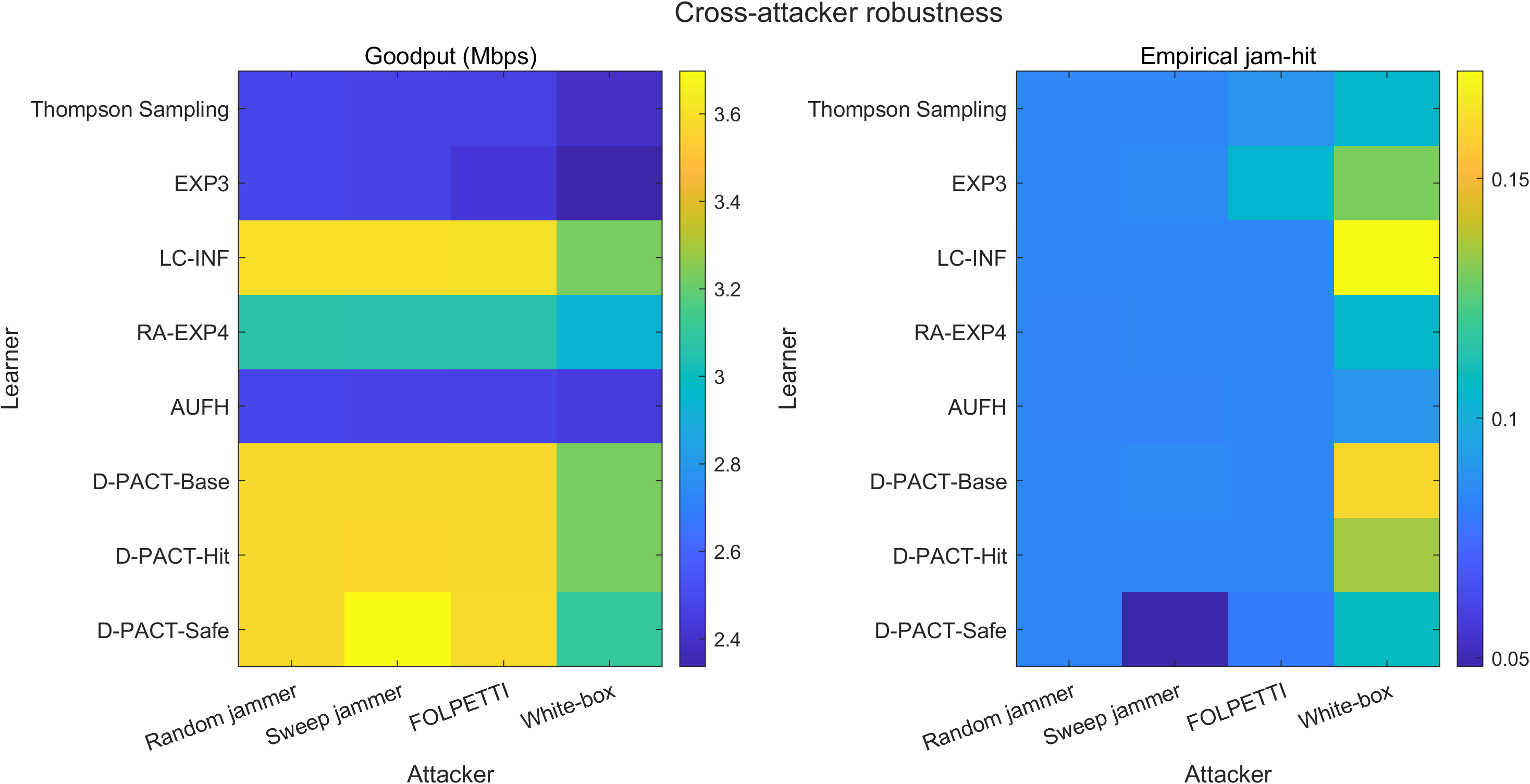}
  \caption{Saved full-run cross-attacker heatmaps. The left panel reports
  mean goodput in Mbps and the right panel reports the empirical jam-hit
  frequency. Rows contain all eight evaluated methods and columns contain
  the four jammer types listed in Table~\ref{tab:app_cross_attacker}.
  LC-INF, RA-EXP4, and AUFH abbreviate LC-Tsallis-INF-Online, risk-aware EXP4,
  and AUFH-EXP3++, respectively.}
  \label{fig:app_cross_attacker_heatmaps}
\end{figure*}

\begin{figure*}[!t]
  \centering
  \includegraphics[width=0.90\textwidth]
  {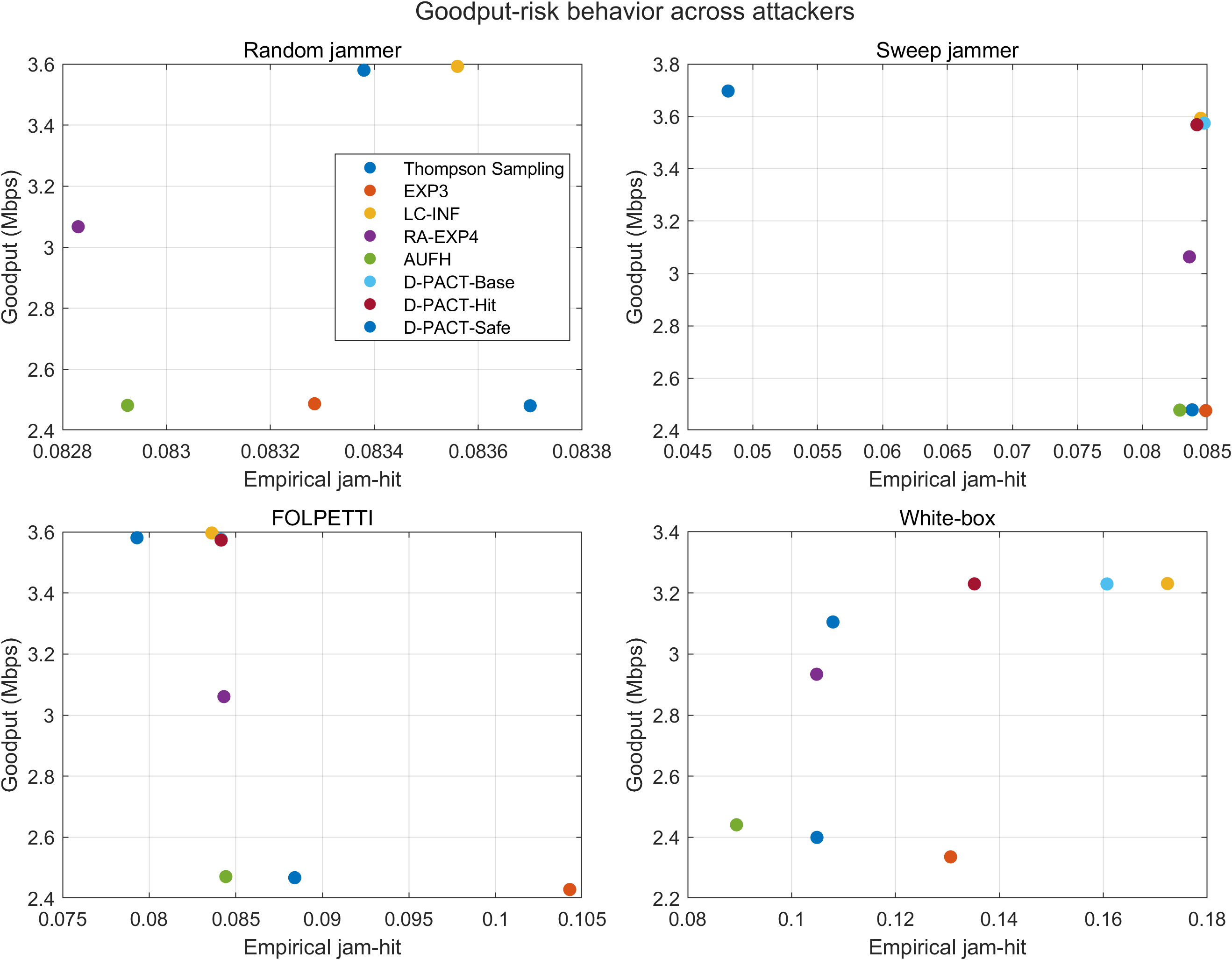}
  \caption{Saved full-run goodput--exposure operating points for the random,
  sweep, FOLPETTI-inspired, and contextual white-box jammers. Each marker is
  the 20-seed mean reported in Table~\ref{tab:app_cross_attacker}; the axes use
  empirical jam-hit frequency rather than conditional expected risk.}
  \label{fig:app_cross_attacker_tradeoff}
\end{figure*}

\FloatBarrier

\raggedbottom
\subsection{Regime Robustness}
\label{app:regime_robustness}

The saved regime suite separates communication-model mismatch without active
jamming from attacked environmental regimes. In the nominal, nonlinear, and
observable-switching cases, Table~\ref{tab:app_model_regimes} reports goodput
and, where defined, the Local-base master mass. The high Local mass in
observable switching and the smaller mass in the nonlinear negative control
give the detailed values behind the model-adaptation comparison in
Fig.~\ref{fig:model_selection}. The identical Base, Hit, and Safe entries in
these no-jammer rows reflect zero attack risk; they are retained rather than
discarded.

\begin{table}[H]
\centering
\caption{Saved no-jammer model-regime results. Each entry is mean goodput in
Mbps / mean Local-base master mass over 20 evaluation seeds; a dash denotes a
method without the D-PACT master.}
\label{tab:app_model_regimes}
\scriptsize
\setlength{\tabcolsep}{1.8pt}
\begin{tabular}{@{}lccc@{}}
\toprule
Method & Nominal & Nonlinear interaction & Observable switching \\
\midrule
LC-INF & 3.915/-- & 3.495/-- & 2.450/-- \\
RA-EXP4 & 3.360/-- & 2.527/-- & 2.125/-- \\
AUFH & 2.709/-- & 0.945/-- & 1.296/-- \\
D-PACT-Base & 3.898/0.157 & 3.476/0.258 & 2.890/0.964 \\
D-PACT-Hit & 3.898/0.157 & 3.476/0.258 & 2.890/0.964 \\
D-PACT-Safe95 & 3.898/0.157 & 3.476/0.258 & 2.890/0.964 \\
\bottomrule
\end{tabular}
\end{table}

For the attacked regimes, Tables~\ref{tab:app_attacked_regimes_a} and
\ref{tab:app_attacked_regimes_b} retain every
saved method, including the lower-goodput outcomes. The reported hit quantity
is the realized empirical jam-hit frequency. It is distinct from the
conditional expected hit risk used by the Safe projection and from a
numerical policy-budget violation. In particular, D-PACT-Safe95 remains near
the $0.11$ conditional budget across the contaminated, exogenous-sweep,
mixed, and hidden-Markov settings, while realized empirical frequencies vary
with the sampled trajectories.

\begin{table}[H]
\centering
\caption{Saved attacked-regime comparison (part 1). Each entry is mean
goodput in Mbps / empirical jam-hit rate over 20 evaluation seeds.}
\label{tab:app_attacked_regimes_a}
\scriptsize
\setlength{\tabcolsep}{2.1pt}
\begin{tabular}{@{}lccc@{}}
\toprule
Method & White-box & Contaminated & Exog. sweep \\
\midrule
LC-INF & 3.235/0.1729 & 3.192/0.1806 & 3.160/0.1882 \\
RA-EXP4 & 2.938/0.1054 & 2.873/0.1102 & 2.835/0.1147 \\
AUFH & 2.441/0.0903 & 2.327/0.0908 & 2.245/0.0906 \\
D-PACT-Base & 3.233/0.1588 & 3.191/0.1641 & 3.161/0.1704 \\
D-PACT-Hit & 3.235/0.1337 & 3.195/0.1408 & 3.166/0.1462 \\
D-PACT-Safe95 & 3.104/0.1081 & 3.019/0.1088 & 2.967/0.1093 \\
\bottomrule
\end{tabular}
\end{table}

\begin{table}[H]
\centering
\caption{Saved attacked-regime comparison (part 2), using the same metric,
seed count, and method order as Table~\ref{tab:app_attacked_regimes_a}.}
\label{tab:app_attacked_regimes_b}
\scriptsize
\setlength{\tabcolsep}{4pt}
\begin{tabular}{@{}lcc@{}}
\toprule
Method & Mixed & Hidden Markov \\
\midrule
LC-INF & 2.947/0.2361 & 2.679/0.1749 \\
RA-EXP4 & 2.582/0.1378 & 2.292/0.1040 \\
AUFH & 2.289/0.1126 & 1.911/0.1297 \\
D-PACT-Base & 2.975/0.1951 & 2.668/0.1697 \\
D-PACT-Hit & 2.983/0.1759 & 2.568/0.1379 \\
D-PACT-Safe95 & 2.571/0.1093 & 2.398/0.1093 \\
\bottomrule
\end{tabular}
\end{table}

\begin{figure*}[!t]
  \centering
  \includegraphics[width=0.94\textwidth]
  {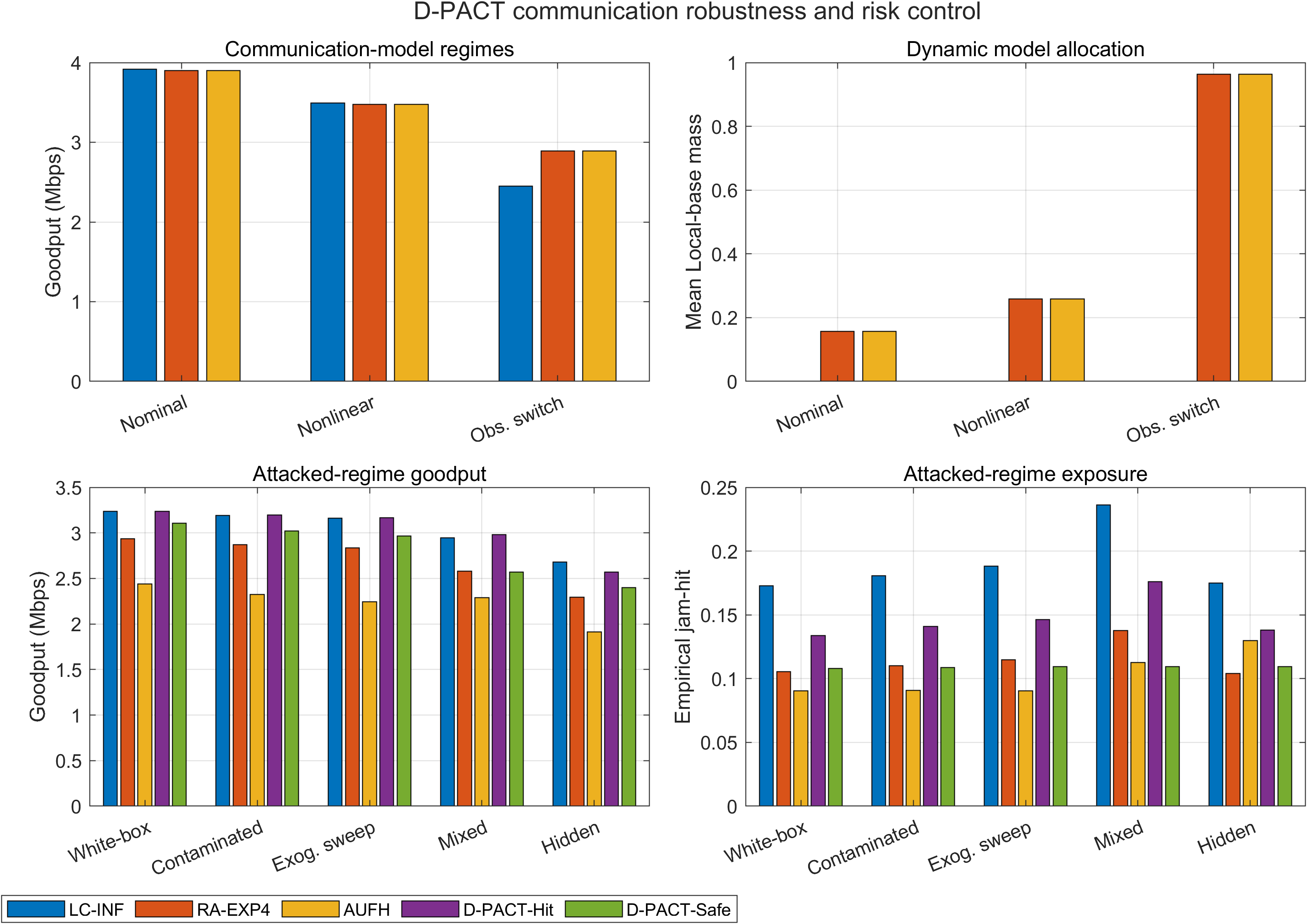}
  \caption{Saved full-run regime summary. The upper panels show goodput and
  Local-base allocation for nominal, nonlinear, and observable-switching
  communication models without active jamming. The lower panels show mean
  goodput and empirical jam-hit frequency under the contextual white-box
  jammer in the nominal white-box, contaminated, exogenous-sweep, mixed, and
  hidden-Markov environmental regimes. Exact complete comparisons are given
  in Tables~\ref{tab:app_model_regimes},
  \ref{tab:app_attacked_regimes_a}, and
  \ref{tab:app_attacked_regimes_b}.}
  \label{fig:app_regime_summary}
\end{figure*}

\subsection{Scaling and Runtime}
\label{app:scaling_runtime}

The saved scaling sweep varies $K\in\{8,12,16,32\}$ at $T=10^4$ and
$T\in\{10^3,5\!\times\!10^3,10^4,2\!\times\!10^4\}$ at $K=12$.
Fig.~\ref{fig:app_scaling} retains the comparative curves, while
Table~\ref{tab:app_scaling} gives the D-PACT-Hit and D-PACT-Safe95 operating
points with conditional expected risk, empirical jam-hit frequency, and
runtime. The Safe budget is scaled with $K$ in the channel-count sweep and is
$0.11$ in the horizon sweep. At $K=32$, the unprojected Hit policy is already
below the scaled Safe budget, so the two goodputs are nearly identical.

Runtime is measured in seconds on the hardware stated in
Section~\ref{sec:experiments}: MATLAB R2020b on a 12th Gen Intel Core
i7-12650H processor with 16 GB DDR4 memory and 16 MATLAB workers. The saved
timing excludes environment generation, aggregation, plotting, and file
output, and is therefore a relative algorithmic-overhead measurement rather
than an end-to-end wall-clock benchmark.

\begin{table*}[!t]
\centering
\caption{Detailed saved D-PACT scaling results (10 seeds per configuration).
Expected hit is the conditional marginal policy risk; empirical hit is the
realized jam-hit frequency. Runtime is in seconds and uses the timing scope
stated in Section~\ref{sec:experiments}.}
\label{tab:app_scaling}
\small
\setlength{\tabcolsep}{5pt}
\begin{tabular}{@{}cclccccc@{}}
\toprule
Sweep & Value & Method & $\tau$ & Goodput (Mbps) & Expected hit & Empirical hit & Runtime (s) \\
\midrule
K & 8 & D-PACT-Hit & -- & 2.931 & 0.1978 & 0.1994 & 84.9 \\
K & 8 & D-PACT-Safe95 & 0.1505 & 2.758 & 0.1489 & 0.1513 & 87.4 \\
\addlinespace[1pt]
K & 12 & D-PACT-Hit & -- & 3.235 & 0.1336 & 0.1348 & 78.7 \\
K & 12 & D-PACT-Safe95 & 0.1100 & 3.111 & 0.1081 & 0.1074 & 101.1 \\
\addlinespace[1pt]
K & 16 & D-PACT-Hit & -- & 3.397 & 0.0997 & 0.0990 & 110.7 \\
K & 16 & D-PACT-Safe95 & 0.0898 & 3.316 & 0.0868 & 0.0853 & 113.8 \\
\addlinespace[1pt]
K & 32 & D-PACT-Hit & -- & 3.639 & 0.0459 & 0.0467 & 114.4 \\
K & 32 & D-PACT-Safe95 & 0.0594 & 3.638 & 0.0458 & 0.0466 & 73.2 \\
\addlinespace[1pt]
T & 1000 & D-PACT-Hit & -- & 3.306 & 0.1122 & 0.1044 & 2.9 \\
T & 1000 & D-PACT-Safe95 & 0.1100 & 3.232 & 0.1032 & 0.0977 & 3.0 \\
\addlinespace[1pt]
T & 5000 & D-PACT-Hit & -- & 3.261 & 0.1301 & 0.1286 & 15.9 \\
T & 5000 & D-PACT-Safe95 & 0.1100 & 3.141 & 0.1074 & 0.1071 & 16.4 \\
\addlinespace[1pt]
T & 10000 & D-PACT-Hit & -- & 3.235 & 0.1336 & 0.1348 & 78.7 \\
T & 10000 & D-PACT-Safe95 & 0.1100 & 3.111 & 0.1081 & 0.1074 & 101.1 \\
\addlinespace[1pt]
T & 20000 & D-PACT-Hit & -- & 3.223 & 0.1359 & 0.1366 & 122.5 \\
T & 20000 & D-PACT-Safe95 & 0.1100 & 3.085 & 0.1085 & 0.1093 & 207.0 \\
\bottomrule
\end{tabular}
\end{table*}

\begin{figure*}[!t]
  \centering
  \includegraphics[width=0.92\textwidth]
  {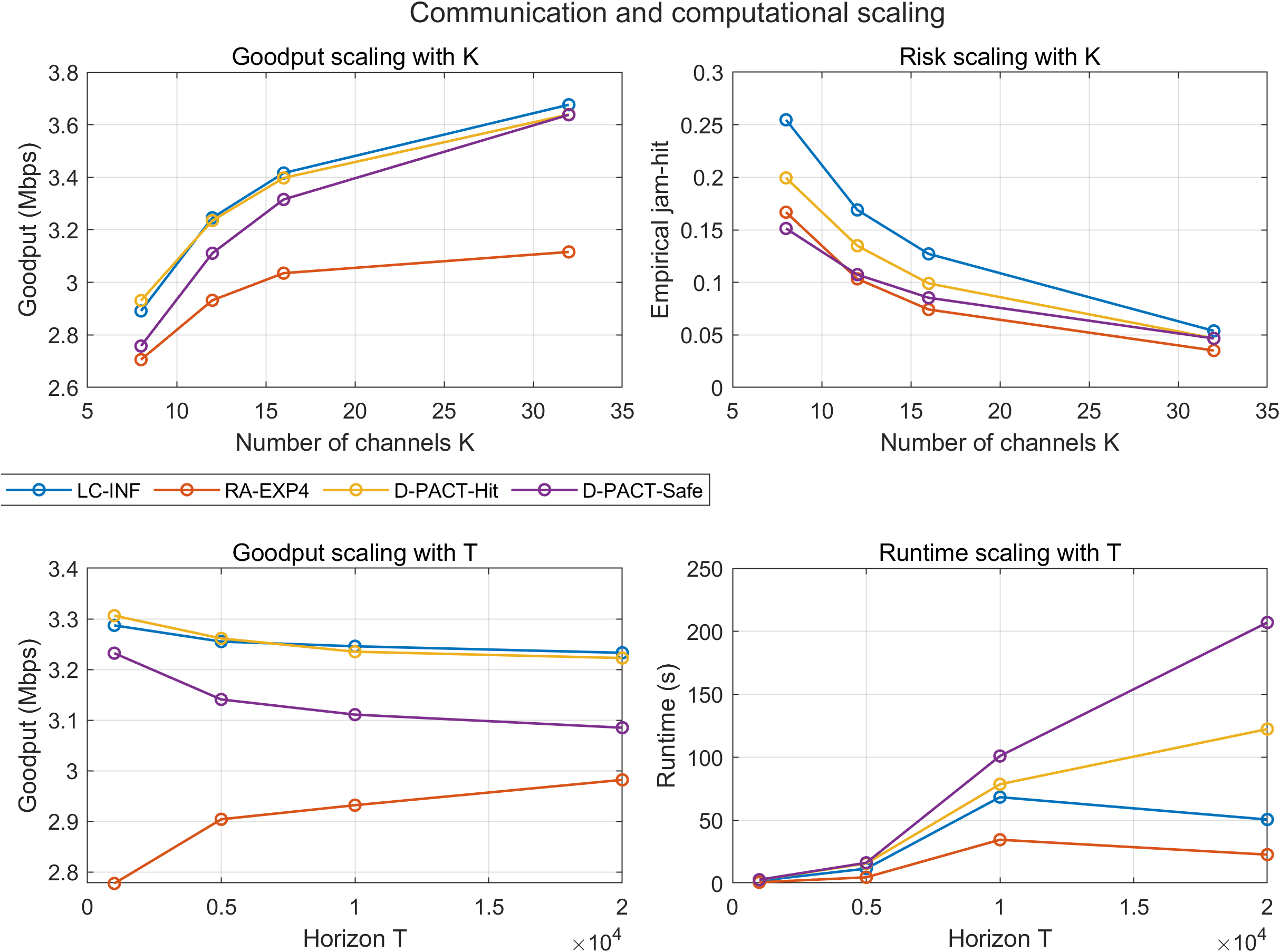}
  \caption{Saved 10-seed scaling results. The upper panels show goodput and
  empirical jam-hit frequency versus the number of channels. The lower panels
  show goodput and recorded runtime versus the horizon. Runtime uses the scope
  described in the text and Table~\ref{tab:app_scaling}.}
  \label{fig:app_scaling}
\end{figure*}

\subsection{Additional Ablation and Safe Calibration}
\label{app:safe_calibration}

The mechanism ablation is already reported in
Table~\ref{tab:ablation}, and the model-selection effect is already shown in
Fig.~\ref{fig:model_selection}; duplicating those displays would not add an
independent result. Table~\ref{tab:app_safe_calibration} instead supplies the
saved calibration diagnostics omitted from the main presentation. Here,
expected hit is the conditional marginal policy risk, pre-risk and post-risk
are the corresponding values before and after KL projection, and maximum
violation is the numerical excess of post-projection risk over the budget.
These quantities must not be identified with the empirical jam-hit frequency.

The calibrated Safe95 point at $\tau=0.11$ has $3.111$ Mbps in the independent
calibration scan, while the disjoint main evaluation reports approximately
$3.104$ Mbps in Table~\ref{tab:ablation}. The two saved values are deliberately
kept separate because they arise from different seed sets. Across the scan,
the recorded maximum policy-budget violation is at most $10^{-10}$.

\begin{table*}[!t]
\centering
\caption{Saved independent 20-seed Safe calibration scan under the contextual
white-box jammer. Goodput is mean $\pm$ 95\% confidence-interval half-width.
The budget applies to post-projection conditional risk, not to the realized
empirical hit frequency.}
\label{tab:app_safe_calibration}
\scriptsize
\setlength{\tabcolsep}{3.5pt}
\begin{tabular}{@{}ccccccccc@{}}
\toprule
$\tau$ & Goodput (Mbps) & Expected hit & Empirical hit & Pre-risk & Post-risk & Activation & Max violation & Runtime (s) \\
\midrule
0.085 & $2.662\!\pm\!0.006$ & 0.0850 & 0.0846 & 0.1088 & 0.0850 & 0.996 & 1.00e-10 & 27.1 \\
0.090 & $2.811\!\pm\!0.005$ & 0.0899 & 0.0889 & 0.1140 & 0.0899 & 0.981 & 1.00e-10 & 27.6 \\
0.100 & $3.006\!\pm\!0.005$ & 0.0994 & 0.0990 & 0.1223 & 0.0994 & 0.898 & 1.00e-10 & 27.5 \\
0.110 & $3.111\!\pm\!0.006$ & 0.1081 & 0.1073 & 0.1277 & 0.1081 & 0.768 & 1.00e-10 & 28.0 \\
0.120 & $3.168\!\pm\!0.005$ & 0.1154 & 0.1143 & 0.1306 & 0.1154 & 0.610 & 1.00e-10 & 27.3 \\
0.140 & $3.214\!\pm\!0.008$ & 0.1252 & 0.1243 & 0.1329 & 0.1252 & 0.318 & 9.99e-11 & 27.0 \\
\bottomrule
\end{tabular}
\end{table*}

\FloatBarrier

\bibliographystyle{IEEEtran}
\bibliography{references}

@article{wang2012ufh,
  author  = {Qian Wang and Ping Xu and Kui Ren and Xiang-Yang Li},
  title   = {Towards Optimal Adaptive {UFH}-Based Anti-Jamming Wireless Communication},
  journal = {IEEE Journal on Selected Areas in Communications},
  volume  = {30},
  number  = {1},
  pages   = {16--30},
  month   = jan,
  year    = {2012},
  doi     = {10.1109/JSAC.2012.120103}
}

@article{zhou2016unknown,
  author  = {Pan Zhou and Tao Jiang},
  title   = {Toward Optimal Adaptive Wireless Communications in Unknown Environments},
  journal = {IEEE Transactions on Wireless Communications},
  volume  = {15},
  number  = {5},
  pages   = {3655--3667},
  month   = may,
  year    = {2016},
  doi     = {10.1109/TWC.2016.2524638}
}

@article{hanawal2016joint,
  author  = {Manjesh K. Hanawal and Mohammad J. Abdel-Rahman and Marwan Krunz},
  title   = {Joint Adaptation of Frequency Hopping and Transmission Rate for Anti-Jamming Wireless Systems},
  journal = {IEEE Transactions on Mobile Computing},
  volume  = {15},
  number  = {9},
  pages   = {2247--2259},
  month   = sep,
  year    = {2016},
  doi     = {10.1109/TMC.2015.2492556}
}

@inproceedings{odeyomi2020mitigating,
  author    = {Olusola T. Odeyomi},
  title     = {Mitigating Jamming Attacks in Uncoordinated Frequency Hopping Using Multi-Armed Bandit},
  booktitle = {2020 11th IEEE Annual Ubiquitous Computing, Electronics \& Mobile Communication Conference (UEMCON)},
  pages     = {207--212},
  year      = {2020},
  doi       = {10.1109/UEMCON51285.2020.9298038}
}

@article{xu2020intelligent,
  author  = {Jianliang Xu and Huaxun Lou and Weifeng Zhang and Gaoli Sang},
  title   = {An Intelligent Anti-Jamming Scheme for Cognitive Radio Based on Deep Reinforcement Learning},
  journal = {IEEE Access},
  volume  = {8},
  pages   = {202563--202572},
  year    = {2020},
  doi     = {10.1109/ACCESS.2020.3036027}
}

@article{qi2024hopping,
  author  = {Jie Qi and Hongming Zhang and Xiaolei Qi and Mugen Peng},
  title   = {Deep Reinforcement Learning Based Hopping Strategy for Wideband Anti-Jamming Wireless Communications},
  journal = {IEEE Transactions on Vehicular Technology},
  volume  = {73},
  number  = {3},
  pages   = {2131--2144},
  month   = mar,
  year    = {2024},
  doi     = {10.1109/TVT.2023.3324387}
}

@inproceedings{kato2025lc,
  author    = {Masahiro Kato and Shinji Ito},
  title     = {{LC-Tsallis-INF}: Generalized Best-of-Both-Worlds Linear Contextual Bandits},
  booktitle = {Proceedings of the 28th International Conference on Artificial Intelligence and Statistics},
  series    = {Proceedings of Machine Learning Research},
  volume    = {258},
  pages     = {3655--3663},
  publisher = {PMLR},
  year      = {2025}
}

@inproceedings{takemura2021misspecified,
  author    = {Kei Takemura and Shinji Ito and Daisuke Hatano and Hanna Sumita and Takuro Fukunaga and Naonori Kakimura and Ken-ichi Kawarabayashi},
  title     = {A Parameter-Free Algorithm for Misspecified Linear Contextual Bandits},
  booktitle = {Proceedings of the 24th International Conference on Artificial Intelligence and Statistics},
  series    = {Proceedings of Machine Learning Research},
  volume    = {130},
  pages     = {3367--3375},
  publisher = {PMLR},
  year      = {2021}
}

@inproceedings{foster2019model,
  author    = {Dylan J. Foster and Akshay Krishnamurthy and Haipeng Luo},
  title     = {Model Selection for Contextual Bandits},
  booktitle = {Advances in Neural Information Processing Systems},
  volume    = {32},
  pages     = {14714--14725},
  year      = {2019}
}

@inproceedings{muthukumar2022model,
  author    = {Vidya K. Muthukumar and Akshay Krishnamurthy},
  title     = {Universal and Data-Adaptive Algorithms for Model Selection in Linear Contextual Bandits},
  booktitle = {Proceedings of the 39th International Conference on Machine Learning},
  series    = {Proceedings of Machine Learning Research},
  volume    = {162},
  pages     = {16197--16222},
  publisher = {PMLR},
  year      = {2022}
}

@inproceedings{agarwal2017corral,
  author    = {Alekh Agarwal and Haipeng Luo and Behnam Neyshabur and Robert E. Schapire},
  title     = {Corralling a Band of Bandit Algorithms},
  booktitle = {Proceedings of the 30th Conference on Learning Theory},
  series    = {Proceedings of Machine Learning Research},
  volume    = {65},
  pages     = {12--38},
  publisher = {PMLR},
  year      = {2017}
}

@inproceedings{pacchiano2020model,
  author    = {Aldo Pacchiano and My Phan and Yasin Abbasi-Yadkori and Anup Rao and Julian Zimmert and Tor Lattimore and Csaba Szepesv{\'a}ri},
  title     = {Model Selection in Contextual Stochastic Bandit Problems},
  booktitle = {Advances in Neural Information Processing Systems},
  volume    = {33},
  pages     = {10328--10337},
  year      = {2020}
}

@article{amuru2016jamming,
  author  = {SaiDhiraj Amuru and Cem Tekin and Mihaela van der Schaar and R. Michael Buehrer},
  title   = {Jamming Bandits---A Novel Learning Method for Optimal Jamming},
  journal = {IEEE Transactions on Wireless Communications},
  volume  = {15},
  number  = {4},
  pages   = {2792--2808},
  month   = apr,
  year    = {2016},
  doi     = {10.1109/TWC.2015.2510643}
}

@inproceedings{bout2022folpetti,
  author    = {Emilie Bout and Alessandro Brighente and Mauro Conti and Valeria Loscri},
  title     = {{FOLPETTI}: A Novel Multi-Armed Bandit Smart Attack for Wireless Networks},
  booktitle = {Proceedings of the 17th International Conference on Availability, Reliability and Security},
  pages     = {1--10},
  note      = {Art. no. 21},
  year      = {2022},
  doi       = {10.1145/3538969.3539001}
}

@inproceedings{sun2017safety,
  author    = {Wen Sun and Debadeepta Dey and Ashish Kapoor},
  title     = {Safety-Aware Algorithms for Adversarial Contextual Bandit},
  booktitle = {Proceedings of the 34th International Conference on Machine Learning},
  series    = {Proceedings of Machine Learning Research},
  volume    = {70},
  pages     = {3280--3288},
  publisher = {PMLR},
  year      = {2017}
}

@article{thornton2022constrained,
  author  = {Charles E. Thornton and R. Michael Buehrer and Anthony F. Martone},
  title   = {Constrained Contextual Bandit Learning for Adaptive Radar Waveform Selection},
  journal = {IEEE Transactions on Aerospace and Electronic Systems},
  volume  = {58},
  number  = {2},
  pages   = {1133--1148},
  month   = apr,
  year    = {2022},
  doi     = {10.1109/TAES.2021.3109110}
}

\vfill
\end{document}